\newtheorem{example}{Example}
\newtheorem{definition}{Definition}
\newtheorem{proposition}{Proposition}
\newcommand{\inputs}{{\sf In}}
\newcommand{\outputs}{{\sf Out}}
\newcommand{\actions}{{\sf Acts}}
\newcommand{\intermediates}{{\sf Mid}}
\newcommand{\comestibles}{{\sf Coms}}
\newcommand{\nodes}{{\sf Nodes}}
\newcommand{\arcs}{{\sf Arcs}}
\newcommand{\graph}{{\sf Graph}}
\newcommand{\type}{{\sf Type}}
\newcommand{\compose}{{\sf Compose}}
\newcommand{\decompose}{{\sf Decompose}}
\newcommand{\cost}{{\sf Cost}}
\newcommand{\front}{{\sf Front}}
\title{A Graphical Formalism for Commonsense Reasoning with Recipes}
\author{%
Antonis Bikakis$^1$\and
Aissatou Diallo$^2$\and
Luke Dickens $^1$\and
Anthony Hunter$^2$
\and Rob Miller$^1$\\
\affiliations
$^1$Dept of Information Studies, University College London, London, UK\\
$^2$Dept of Computer Science, University College London, London, UK\\
\emails
\{a.bikakis, a.diallo, l.dickens, anthony.hunter, r.s.miller \}@ucl.ac.uk
}
\begin{document}

\maketitle

\begin{abstract}
Whilst cooking is a very important human activity, there has been little consideration given to how we can formalize recipes for use in a reasoning framework. We address this need by proposing a graphical formalization that captures the comestibles (ingredients, intermediate food items, and final products), and the actions on comestibles in the form of a labelled bipartite graph. We then propose formal definitions for comparing recipes, for composing recipes from subrecipes, and for deconstructing recipes into subrecipes. We also introduce and compare two formal definitions for substitution into recipes
which are required when there are missing ingredients, or some actions are not possible, or because there is a need to change the final product somehow. 
\end{abstract}

\section{Introduction}

Undeniably, cooking is an important activity for humans in order to live and to enjoy living.
But perhaps the commonplace nature of it has meant that little consideration has been given to how cooking involves reasoning with recipes. To address this shortcoming, we propose a high-level representation of recipes as labelled bipartite graphs where the first subset of nodes denotes the comestibles involved in the recipe (ingredients, intermediate food items, final products, i.e. dishes, and by-products) and the second subset of nodes denotes actions on those comestibles. The edges reflect the (possibly partial) sequence of steps taken in the recipe going from the ingredients to final products. The labelling on nodes is used to annotate the type of comestible or action for each node. 
Using our formal representation, we can consider how we can compose recipes from subrecipes, and decompose recipes into subrecipes. We can also consider comparing recipes to determine whether two recipes are equivalent, or whether one recipe is finer-grained than another.

We can also consider substitution in recipes (i.e. replacing comestibles and actions, or even replacing subrecipes, in recipes). 
We could require this because of lack of availability of some ingredients, dietary constraints, a need to reduce environmental impact, or an inability to carry out some actions.
For example, we could have a recipe for making bread that includes the
ingredient butter. If we do not have this ingredient, but we do have olive oil, then we can substitute olive oil for butter in the recipe.

Whilst our focus in this paper is on cooking, 
we see that substitution is an important issue across the gamut of human activities 
from every day home life (e.g.\ cooking, gardening, DIY, first-aid, etc), 
working life (e.g.\ farming, manufacturing, etc),
through to crisis management (e.g.\ dealing with the aftermath of earthquakes) \cite{Bikakis2021}.

There are other proposals for a graphical representation of recipes.
They are primarily used as a target language for natural language processing, 
or as a language for representing recipes in a corpus.
However, the meaning of the graphical notation is often unclear
as there can be multiple kinds of node and arc, 
and the difference between them is not formally defined. 
In contrast, we aim for a simple and clear syntax and semantics. 
Furthermore, there is no consideration in the literature of a formalism that supports the following kinds of reasoning:
(1) Operations to compose recipes from atomic recipes; 
(2) Operations to deconstruct complex recipes; 
(3) Formal methods for analyzing or comparing recipes;
And (4) Formal methods for substitution in recipes (including ingredients, actions, and subrecipes).

We proceed as follows:
Section \ref{section:types} introduces type hierarchies used for actions and comestibles; 
Section \ref{section:recipes} presents a representation of recipes as bipartite graphs;
Section \ref{section:acceptability} considers acceptability of recipes; 
Section \ref{section:comparison} presents definitions for comparing recipes;
Section \ref{section:composition} presents definitions for composition of recipes from subrecipes;
Section \ref{section:typesubstitution} presents substitution based on changing the type of nodes; 
Section \ref{section:structuralsubstitution} presents substitution based on changing the structure of the graph; 
Section \ref{section:literature} discusses related literature;
and Section \ref{section:discussion} discusses our proposal and future work.


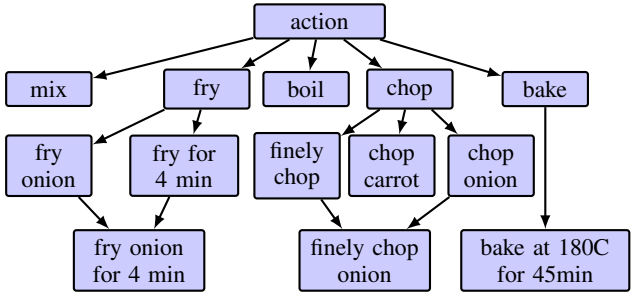
\begin{figure}
\begin{center}
\begin{tikzpicture}
[<-,>=latex,thick, scale=0.6,
onto/.style={draw,text centered, text width=9mm,
shape=rectangle, rounded corners=1pt,
fill=blue!20,font=\footnotesize}
]
\node[onto,text width=15mm] (a1) [] at (5,5) {action};
\node[onto] (b1) [] at (-1,3.5) {mix};
\node[onto] (b2) [] at (2.5,3.5) {fry};
\node[onto] (b2a) [] at (4.7,3.5) {boil};
\node[onto] (b3) [] at (7,3.5) {chop};
\node[onto] (b4) [] at (10,3.5) {bake};
\path	(b1) edge node[above] {} (a1);
\path	(b2) edge node[above] {} (a1);
\path	(b2a) edge node[above] {} (a1);
\path	(b3) edge node[above] {} (a1);
\path	(b4) edge node[above] {} (a1);
\node[onto] (c1) [] at (-1,1.8) {fry onion};
\node[onto] (c2) [text width=12mm] at (2,1.8) {fry for 4 min};
\node[onto] (c3) [] at (4.5,1.8) {finely chop};
\node[onto] (c4) [] at (6.6,1.8) {chop carrot};
\node[onto] (c5) [] at (8.8,1.8) {chop onion};
\path	(c1) edge node[above] {} (b2);
\path	(c2) edge node[above] {} (b2);
\path	(c3) edge node[above] {} (b3);
\path	(c4) edge node[above] {} (b3);
\path	(c5) edge node[above] {} (b3);
\node[onto,text width=15mm] (d1) [] at (1,-0.3) {fry onion for 4 min};
\node[onto,text width=15mm] (d2) [] at (6,-0.3) {finely chop onion};
\node[onto,text width=20mm] (d3) [] at (10,-0.3) {bake at 180C for 45min};
\path	(d1) edge node[above] {} (c1);
\path	(d1) edge node[above] {} (c2);
\path	(d2) edge node[above] {} (c3);
\path	(d2) edge node[above] {} (c5);
\path	(d3) edge node[above] {} (b4);
\end{tikzpicture}
\end{center}
\caption{\label{fig:hierarchy}An incomplete type hierarchy for action types. Each type is more specialized than its parent. The text in each node specifies the type, and may be written in different ways (e.g. ``bake at 180C for 45min" is equivalent to ``bake at 356F for 45min").}
\end{figure}


\begin{figure}
\begin{center}
\begin{tikzpicture}
[<-,>=latex,thick, scale=0.6,
onto/.style={draw,text centered, text width=9mm,
shape=rectangle, rounded corners=1pt,
fill=blue!20,font=\footnotesize}
]
\node[onto,text width=15mm] (a1) [] at (5,3) {comestible};
\node[onto] (b1) [] at (-0.5,1.5) {fish};
\node[onto] (b2) [] at (2.5,1.5) {pasta};
\node[onto] (b2a) [text width=12mm] at (5,1.5) {vegetable};
\node[onto] (b3) [] at (7.5,1.5) {fruit};
\node[onto] (b4) [] at (10.3,1.5) {diary};
\path	(b1) edge node[above] {} (a1);
\path	(b2) edge node[above] {} (a1);
\path	(b2a) edge node[above] {} (a1);
\path	(b3) edge node[above] {} (a1);
\path	(b4) edge node[above] {} (a1);
\node[onto] (c1) [text width=12mm] at (-0.2,0) {spaghetti};
\node[onto] (c2) [] at (2.5,0) {fusilli};
\node[onto] (c3) [] at (5,0) {onion};
\node[onto] (c4) [] at (7.5,0) {carrot};
\node[onto] (c5) [] at (10.3,0) {milk};
\path	(c1) edge node[above] {} (b2);
\path	(c2) edge node[above] {} (b2);
\path	(c3) edge node[above] {} (b2a);
\path	(c4) edge node[above] {} (b2a);
\path	(c5) edge node[above] {} (b4);
\node[onto] (d1) [text width=18mm] at (1,-1.5) {raw onion};
\node[onto] (d2) [text width=18mm] at (5,-1.5) {fried onion};
\node[onto] (d3) [text width=18mm] at (9.3,-1.5) {sliced onion};
\path	(d1) edge node[above] {} (c3);
\path	(d2) edge node[above] {} (c3);
\path	(d3) edge node[above] {} (c3);
\end{tikzpicture}
\end{center}
\caption{\label{fig:hierarchy2}An incomplete type hierarchy for comestible types. Each type is more specialized than its parent.}
\end{figure}
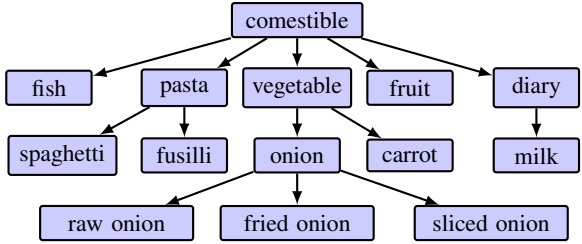

\section{Type hierarchies}
\label{section:types}

We focus on comestibles and actions as the two basic notions in a recipe.
We treat actions as atomic so that the details of duration, modifiers, equipment, etc are assumed to be part of the description of the action.
At a higher-level, we might consider actions such as ``mix" or ``bake", 
but at a lower-level, we might present more details. 
Similarly we treat comestibles as atomic, 
and so at higher-level we might have comestibles such as ``fish" or ``vegetable", 
then at a more detailed level we might have ``cherry tomato". 

We assume that each action and comestible is represented in a {\bf type hierarchy} which is an acyclic directed graph with a unique maximal node (root). 
Each node is a type
and each node in the hierarchy is a {\bf subtype} of its ancestors (e.g. Figures \ref{fig:hierarchy} and \ref{fig:hierarchy2}).
Let $t_1 \preceq t_2$ denote type $t_1$ is a subtype of, or equal to, type $t_2$,
(e.g. in Figure \ref{fig:hierarchy}, ``finely chop onion" is a subtype of ``chop"),
and let $t_1 \simeq t_2$ denote that either $t_1 \preceq t_2$ or $t_2 \preceq t_1$ holds (i.e they are on the same path). 
We also assume that comestibles are infinitely available. 
We do not consider quantities of them and so we assume there is always more than needed for a recipe.

\section{Recipes as graphs}
\label{section:recipes}

We represent the ``structure" of recipes as bipartite graphs based on a finite set of nodes $\cal C$, 
called comestible nodes, 
and a finite set of nodes $\cal A$, 
called action nodes, 
such that ${\cal C} \cap {\cal A} = \emptyset$.
For each action node, an incoming (respectively outgoing) node represents an input (respectively output) for the action.


\begin{definition}
A {\bf recipe graph} is a tuple $(C,A,E)$ where:
(1) $\emptyset \subset C \subseteq {\cal C}$ 
and $\emptyset \subset A \subseteq {\cal A}$; 
(2) $E$ is a set of arcs that is a subset of $(C \times A) \cup (A \times C)$; 
(3) $(C\cup A,E)$ is a connected acyclic graph; 
(4) for all $n_a \in A$, there are arcs $(n_c,n_a)$ and $(n_a,n'_c)$ in $E$;
and
(5) for all $n_c \in C$, if $(n_a,n_c),(n'_a,n_c) \in E$, then $n_a = n'_a$.
\end{definition}

A recipe graph cannot be an empty graph (condition 1). 
 For each action node, there is at least one incoming arc 
 and at least one outgoing arc (condition 4),
 and for each comestible node, there is at most one incoming arc (condition 5).
 Nodes are not necessarily unique to a recipe graph.
 So for recipe graphs $(C,A,E)$ and $(C',A',E')$,  
 it is possible for $(C\cup A) \cap (C'\cup A') \neq \emptyset$.
Conditon 2 ensures that recipe graphs are bipartite, 
but note that not all bipartite graphs are recipe graphs.

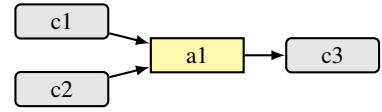
\begin{figure}
\begin{center}
\begin{tikzpicture}
[->,>=latex,thick, scale=0.6,
com/.style={draw,text centered, text width=10mm,
shape=rectangle, rounded corners=2pt,
fill=gray!20,font=\footnotesize},
act/.style={draw,text centered, 
shape=rectangle,text width=10mm,
fill=yellow!40,font=\footnotesize}
]
\node[com] (a1) [] at (0,1.5) {c1};
\node[com] (a1a) [] at (0,0) {c2};
\node[act] (a2) [] at (3,0.75)  {a1};
\node[com] (a3) [] at (6,0.75) {c3};
\path	(a1) edge node[above] {} (a2);
\path	(a1a) edge node[above] {} (a2);
\path	(a2) edge node[above] {} (a3);
\end{tikzpicture}
\end{center}
\caption{\label{fig:atomicrecipegraph}An atomic recipe graph where $c1$, $c2$, and $c3$ are comestible nodes and $a1$ is an action node.}
\end{figure}

\begin{definition}
An {\bf atomic recipe graph} is a recipe graph $(C,A,E)$ where $A$ contains exactly one action node.
\end{definition}

 We give an atomic recipe graph in Figure \ref{fig:atomicrecipegraph}
 and a non-atomic recipe graph in Figure \ref{fig:spaghetti:1}.

A recipe (as illustrated in Figure \ref{fig:spaghetti:1}) is a recipe graph with a typing function. 
This function assigns an action (respectively comestible) type to each action (respectively comestible) node,
s.t. that each type comes from the type hierarchy 
and no two comestible nodes get types from the same branch of the hierarchy 
(which ensures no comestible is the result of one or more actions on itself). 

\begin{definition}
\label{def:recipe}
A {\bf recipe} $R$ is a tuple $(C,A,E,F)$
where $(C,A,E)$ is a recipe graph
and $F$ is a {\bf typing function} that assigns an action (respectively comestible) type to each action (respectively comestible) node
s.t. for all $n,n'\in C$, if $F(n) \simeq F(n')$, 
then $n = n'$.
For a recipe $R = (C,A,E,F)$,
let $\graph(R) = (C,A,E)$, 
$\type(R) = F$, 
$\nodes(R) = C \cup A$,
and $\arcs(R) = E$. 
\end{definition}

A node with no incoming edges represents a comestible that has the role of being an {\bf ingredient} for the recipe (such as flour, eggs, sugar, or pesto sauce), and a node with no outgoing edges represents a comestible that has role of being a {\bf product} (the intended output such as a dish or something that can be used as an ingredient for another recipe) or a {\bf by-product} (i.e. a supplementary output) such as water from boiling potatoes which can be used as an ingredient for another recipe. A comestible node with incoming and outgoing edges represents a comestible that has the role of being an {\bf intermediate} item (i.e a comestible that is created and then used within the recipe).

Given a recipe graph $G$, we can use different typing functions. 
The choice of typing function dictates what the recipe is, and its detail.
For example, when considering an assignment to an action node using the hierarchy in Figure \ref{fig:hierarchy}, 
the choice of which branch to use (from the root) will fundamentally affect the recipe, whereas going down the branch will affect the level of detail of the recipe. Hence, for a node a1, typing functions $F_1$, $F_2$, and $F_3$, if $F_1$(a1) = fry, $F_2$(a1) = fry for 4 min, and $F_3$(a1) = bake, then $F_1$ and $F_2$ are assigning actions where one is a subtype of the other, whereas $F_3$ is giving a completely different action.

\begin{figure}
\begin{center}
\begin{tikzpicture}
[->,>=latex,thick, scale=0.75,
com/.style={draw,text centered, text width=40mm,
shape=rectangle, rounded corners=6pt,
fill=gray!20,font=\footnotesize},
act/.style={draw,text centered, 
shape=rectangle,text width=40mm,
fill=yellow!40,font=\footnotesize}
]
\node[com] (c0) [text width=35mm] at (0,8) {c0: boiling salted water};
\node[com] (c1) [text width=25mm] at (5,8) {c1: spaghetti};
\node[act] (a1) [text width=30mm] at (0,6.5) {a1: boil pasta for 10 min};
\node[com] (c2) [text width=35mm] at (0,5) {c2: cooked spaghetti};
\node[com] (c3) [text width=15mm] at (3.7,6.5) {c3: pasata};
\node[com] (c4) [text width=25mm] at (7,6.5) {c4: fried onions};
\node[act] (a2) [text width=30mm] at (6,5) {a2: mix  and  heat};
\node[com] (c5) [text width=35mm] at (6,3.5) {c5: heated  pasta sauce};
\node[act] (a3) [text width=25mm] at (0,3.5) {a3: drain and pour  in bowl};
\node[com] (c6b) [text width=20mm] at (-1,2) {c6: pasta water};
\node[com] (c6) [text width=30mm] at (3,2) {c7: spaghetti in bowl};
\node[act] (a4) [text width=50mm] at (5,0.5) {a4: pour pasta sauce on spaghetti};
\node[com] (c7) [text width=40mm] at (5,-1) {c8: spaghetti con pasata};
\path	(c0) edge node[above] {} (a1);
\path	(c1) edge node[above] {} (a1);
\path	(a1) edge node[above] {} (c2);
\path	(c3) edge node[above] {} (a2);
\path	(c4) edge node[above] {} (a2);
\path	(a2) edge node[above] {} (c5);
\path	(c2) edge node[above] {} (a3);
\path	(a3) edge node[above] {} (c6);
\path	(a3) edge node[above] {} (c6b);
\path	(c6) edge node[above] {} (a4);
\path	(c5) edge node[above] {} (a4);
\path	(a4) edge node[above] {} (c7);
\end{tikzpicture}
\end{center}
\caption{\label{fig:spaghetti:1}A recipe $R$ where for each node, its type is given after the colon.
Let $\graph(R) = G$.
So $G$ is a non-atomic recipe graph. 
Also, 
$\comestibles(G) = \{\mbox{\rm c0, c1, c2, c3, c4, c5, c6, c7, c8} \}$, 
$\actions(G) = \{\mbox{\rm a1, a2, a3, a4} \}$, 
$\inputs(G) = \{\mbox{\rm c0, c1, c3, c4} \}$, 
$\outputs(G) = \{ \mbox{\rm c6, c8} \}$, 
and $\intermediates(G) = \{\mbox{\rm c2, c5, c7} \}$.
}
\end{figure}

Some observations for a recipe graph $(C,A,E)$ are the following where ${\sf InDegree}(n)$ is the number of edges that impinge on $n$ and ${\sf OutDegree}(n)$ is the number of edges that exit $n$:
(1) for all $n \in A$, ${\sf InDegree}(n) > 0$ and ${\sf OutDegree}(n) > 0$; 
(2) for all $n \in C$, ${\sf InDegree}(n) \leq 1$;
(3) there is an $n \in C$ s.t. ${\sf InDegree}(n) = 0$; 
and (4) there is an $n \in C$ s.t. ${\sf OutDegree}(n) = 0$.

The following notation will be useful: 
For a recipe $R$, 
let $\actions(R)$ be the action nodes appearning in $R$; 
let $\inputs(R)$ be the comestible nodes that are the input nodes to $R$;
let $\outputs(R)$ be the comestible nodes that are the output nodes to $R$;
let $\intermediates(R)$ be the comestible nodes that are internal nodes in $R$;
and 
let $\comestibles(R)$ be the comestibles nodes appearing in $R$.
Therefore,
$\inputs(R)\cup\outputs(R)\cup\intermediates(R) = \comestibles(R)$; 
$\inputs(R)\cap\outputs(R) = \emptyset$; 
$\inputs(R)\cap\intermediates(R) = \emptyset$; 
and $\outputs(R)\cap\intermediates(R) = \emptyset$. 
As an example of using these functions, see Figure \ref{fig:spaghetti:1}.

Finally, for a recipe $R$, where $n,n'\in\nodes(R)$, 
we let $n \leq n'$ denote that there is a {\bf path} from $n$ to $n'$ in $\graph(R)$ or $n = n'$.
For example, in Figure \ref{fig:spaghetti:1}, 
some paths include $c_1 \leq c_7$, $c_3 \leq c_8$, and $c_5 \leq c_8$.

\section{Acceptability of recipes}
\label{section:acceptability}

The definition of recipes is liberal as it does not constrain what would be a sensible recipe.
As a starting point to addressing this issue, we can use existing recipes 
to determine what are acceptable inputs and outputs for each type of action. For example, for any action involving chopping or cutting, the input(s) have to be solid. As another example, if an input type is raw carrot, and the action type is chop, then the output type cannot be chopped onion.
We formalize this notion of acceptability in the rest of this section.

For this, we introduce the notion of an {\bf  acceptability tuple} $(t_1,t_2,t_3)$ where $t_1$ and $t_3$ are comestible types, and $t_2$ is an action type, 
so that for an action node $a$, with incoming (respectively outgoing) node $c$ (respectively $c'$), 
if the labelling function $F$ is such that $F(c) = t_1$, $F(a) = t_2$, and $F(c') = t_3$,
this labelling would be acceptable for these nodes. 
Examples of acceptability tuples that we might assume include the following. Note, the last two tuples involve types occurring in Figure \ref{fig:spaghetti:1}.
\[
\begin{array}{l}
\mbox{(bread, cut, slice of bread)}\\
\mbox{(slice of bread, put in toaster on medium, toast)}\\
\mbox{(raw carrot, chop, chopped carrot)}\\
\mbox{(pasata, mix and heat, heated pasta sauce)}\\
\mbox{(fried onions, mix and heat, heated pasta sauce)}\\
\end{array}
\]
The acceptability tuples may be inferred from other acceptability tuples and the type hierarchy. 
For example, from the tuple (carrot, chop, chopped carrot),
we might infer tuples where an action or a comestible is either a more general  
or more  specific type than the original such as the following.
\[
\begin{array}{l}
\mbox{(raw carrot, chop, chopped vegetable) }\\
\mbox{(raw carrot, chop, finely chopped carrot)} \\
\mbox{(raw carrot, finely chop, chopped carrot)} \\
\mbox{(raw carrot, cut in smaller pieces, chopped carrot)} \\
\mbox{(raw purple carrot, chop, chopped carrot)} \\
\end{array}
\]

In the following definition, we use a set of acceptability tuples to determine whether a recipe is acceptable.

\begin{definition}
A recipe $R$ is {\bf acceptable} w.r.t. the set of acceptability tuples $X$
iff for all $(c,a),(a,c') \in \arcs(R)$, $(F(c),F(a),F(c')) \in X$
where $\type(R) = F$.
\end{definition}


So satisfying a set of acceptability tuples is necessary but not sufficient for showing that a recipe makes sense. 
Of course, acceptability is subjective. Different cultures, cuisines, and tastes, are important in determining whether a recipe is acceptable (e.g. whether it is acceptable to use tomato ketchup as a sauce for pasta). Furthermore, an approach to acceptability that is prescriptive might inhibit innovation and creativity in cooking, and it might mean that useful substitutions are missed.
In future work, we will investigate inference of acceptability tuples from a commonsense model of cooking using non-monotonic reasoning.

\section{Comparison of recipes}
\label{section:comparison}

Given a set of recipes, natural questions to ask include 
whether one recipe is a subrecipe of another recipe, 
whether two recipes are equivalent in some sense, 
or whether one recipe is a finer-grained, 
or a more specific, recipe than another. 
We now consider these questions.
First, we introduce the following subsidiary functions 
that give the ingredients and final products of a recipe $R$: 
${\sf Inputs}(R) = \{ F(n) \mid n \in \inputs(R) \}$,
and ${\sf Outputs}(R) = \{ F(n) \mid n \in \outputs(R) \}$.

Recipes $R_1$ and $R_2$ are {\bf isomorphic} iff
there is a bijection $b: \nodes(R_1) \rightarrow \nodes(R_2)$,
s.t. $(n,n') \in \arcs(R_1)$ iff $(b(n),b(n')) \in \arcs(R_2)$.

One recipe is a subrecipe of another recipe means that the former is contained in the latter.

\begin{definition}
A recipe $R' = (C',A',E',F')$ is a {\bf subrecipe} of recipe $R = (C,A,E,F)$,
denoted $R' \sqsubseteq R$, 
iff 
$C' \subseteq C$,
$A' \subseteq A$,
$E' = E \cap (C' \times A') \cup (A' \times C')$,
and for all $n \in C'\cup A'$, $F'(n) = F(n)$. 
\end{definition}

Two recipes are equivalent if they are isomorphic and they have the same labels,
as illustrated in Figure \ref{fig:equivalent}. 

\begin{definition}
Recipes $R_1$ and $R_2$ are {\bf equivalent}, 
denoted $R_1 \equiv R_2$,
iff 
$R_1$ and $R_2$ are isomorphic
with bijection $b: \nodes(R_1) \rightarrow \nodes(R_2)$,
and for all $n \in \nodes(R_1)$, $F_1(n) = F_2(b(n))$, 
where $\type(R_1) = F_1$ and $\type(R_2) = F_2$.
\end{definition}

\begin{figure}
\begin{center}
\begin{tikzpicture}
[->,>=latex,thick, scale=0.75,
com/.style={draw,text centered, 
shape=rectangle, rounded corners=6pt,
fill=gray!20,font=\footnotesize},
act/.style={draw,text centered, 
shape=rectangle,text width=40mm,
fill=yellow!40,font=\footnotesize}
]
\node[com] (a1) [text width=20mm] at (0,1) {c1: raw onion};
\node[act] (a2) [text width=15mm] at (3.7,1) {a1: fry};
\node[com] (a3) [text width=25mm] at (7.8,1) {c2: fried onion};
\path	(a1) edge node[above] {} (a2);
\path	(a2) edge node[above] {} (a3);
\node[com] (b1) [text width=20mm] at (0,0) {c7: raw onion};
\node[act] (b2) [text width=15mm] at (3.7,0) {a8: fry};
\node[com] (b3) [text width=25mm] at (7.8,0) {c4: fried onion};
\path	(b1) edge node[above] {} (b2);
\path	(b2) edge node[above] {} (b3);
\end{tikzpicture}
\end{center}
\caption{\label{fig:equivalent}The top and bottom recipes are equivalent.}
\end{figure}
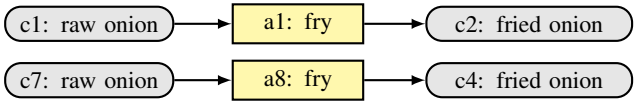

Two recipes are in-out aligned if
they input the same ingredients and output the same products  
but do not necessarily use the same methods to produce the outputs from the inputs (e.g. for making a white loaf, one uses the normal process of mixing, kneading, and baking, and the other uses a bread making machine).
For example, recipes in Figures \ref{fig:spaghetti:1} and \ref{fig:spaghetti:2} are in-out aligned. 

\begin{definition}
Recipe $R_1$ and $R_2$ are {\bf in-out aligned}, denoted $R_1 \equiv_{io} R_2$,
iff $\inputs(R_1)$ = $\inputs(R_2)$, 
$\outputs(R_1)$ = $\outputs(R_2)$,
and for each $c \in \inputs(R_1) \cup \outputs(R_1)$, $F_1(c) = F_2(c)$,
where $\type(R_1) = F_1$ and $\type(R_2) = F_2$.
\end{definition}


When comparing free-text written recipes for the same dish, it is common for some to provide more intermediate steps than another. We capture this in the definition below for one recipe being finer-grained than another (e.g. the recipe in Figure \ref{fig:spaghetti:1} is finer-grained than that in Figure \ref{fig:spaghetti:2}).

\begin{definition}
\label{def:finergrained}
For recipes $R_1$ and $R_2$, $R_1$ is {\bf finer-grained} than $R_2$, 
iff (1) $R_1 \equiv_{io} R_2$; 
and (2) there is a function $g: \nodes(R_1) \rightarrow \nodes(R_2)$
s.t. if $n\leq n'$ in $R_1$,
then $g(n) \leq g(n')$ in $R_2$.
\end{definition}

Condition 2 means that $g$ is an order-preserving map. 
By Definition \ref{def:finergrained}, for a recipe $R$, $R$ is finer grained than $R$.
Also, amongst an equivalence class of recipes as defined by the $\equiv_{io}$ relation, the atomic recipes are the least fine grained (i.e. for all recipes $R,R'$, if $R \equiv_{io} R'$, and $R$ is an atomic recipe, then $R'$ is finer-grained than $R$).

\begin{figure}
\begin{center}
\begin{tikzpicture}
[->,>=latex,thick, scale=0.75,
com/.style={draw,text centered, text width=40mm,
shape=rectangle, rounded corners=6pt,
fill=gray!20,font=\footnotesize},
act/.style={draw,text centered, 
shape=rectangle,text width=40mm,
fill=yellow!40,font=\footnotesize}
]
\node[com] (c0) [text width=20mm] at (0,2.8) {c0: boiling salted water};
\node[com] (c1) [text width=20mm] at (0,1.5) {c1: spaghetti};
\node[com] (c2) [text width=20mm] at (0,0.5) {c3: pasata};
\node[com] (c3) [text width=20mm] at (0,-0.5) {c4: fried onion};
\node[act] (a1) [text width=25mm] at (4,1) {a1: boil spaghetti for 10 minutes, and serve with warmed pasata and fried onions};
\node[com] (c4) [text width=20mm] at (8,1.7) {c8: spaghetti con pasata};
\node[com] (c6) [text width=20mm] at (8,0) {c6: pasta water};
\path	(c0) edge node[above] {} (a1);
\path	(c1) edge node[above] {} (a1);
\path	(c2) edge node[above] {} (a1);
\path	(c3) edge node[above] {} (a1);
\path	(a1) edge node[above] {} (c4);
\path	(a1) edge node[above] {} (c6);
\end{tikzpicture}
\end{center}
\caption{\label{fig:spaghetti:2}This recipe is in-out aligned to the recipe given in Figure \ref{fig:spaghetti:1} as they have the same types for the input and output nodes. But the recipe in Figure \ref{fig:spaghetti:1} is finer-grained than this recipe.}
\end{figure}
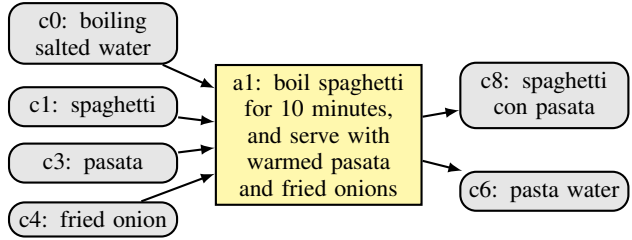

We can compare recipes 
 w.r.t. the type hierarchy.
As defined below, and illustrated in Figure \ref{fig:specificity}, 
this compares the specificity of the types assigned to two isomorphic graphs,
and so contrasts with the above notion of granularity 
that compares the structures of the graphs.
It provides a generalization of the notion of equivalence. 

\begin{definition}
For recipes $R_1$ and $R_2$, $R_1$ is {\bf more specific} than $R_2$, 
iff 
$R_1$ and $R_2$ are isomorphic 
with bijection $b$ from $\nodes(R_1)$ to $\nodes(R_2)$
s.t. for each node $n \in \nodes(R_1)$, 
$F(b(n)) \preceq F(n)$. 
\end{definition}

\begin{figure}
\begin{center}
\begin{tikzpicture}
[->,>=latex,thick, scale=0.75,
com/.style={draw,text centered, 
shape=rectangle, rounded corners=6pt,
fill=gray!20,font=\footnotesize},
act/.style={draw,text centered, 
shape=rectangle,text width=40mm,
fill=yellow!40,font=\footnotesize}
]
\node[com] (a1) [text width=20mm] at (0,1) {c1: raw onion};
\node[act] (a2) [text width=15mm] at (4,1) {a1: fry};
\node[com] (a3) [text width=23mm] at (8,1) {c2: fried onion};
\path	(a1) edge node[above] {} (a2);
\path	(a2) edge node[above] {} (a3);
\node[com] (b1) [text width=20mm] at (0,0) {c1: raw onion};
\node[act] (b2) [text width=25mm] at (3.9,0) {a2: fry for 4 min};
\node[com] (b3) [text width=23mm] at (8,0) {c2: fried onion};
\path	(b1) edge node[above] {} (b2);
\path	(b2) edge node[above] {} (b3);
\end{tikzpicture}
\end{center}
\caption{\label{fig:specificity}The bottom recipe is more specific than the top recipe.}
\end{figure}
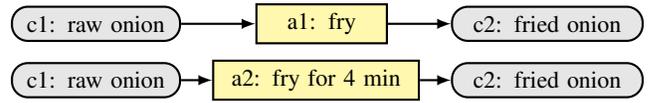

Whilst we have considered some ways of comparing recipes in this section,
there are numerous further ways that we can formalize comparison of recipes that could be useful for applications such as groupings of recipes according to the ingredients used, the actions used, the subrecipes that they contain, and the dishes that they produce.

\section{Composition of recipes}
\label{section:composition}

It is natural to think of a recipe being composed of subrecipes.
For instance, making a pasta dish with a sauce is composed of preparing the pasta 
and preparing the sauce in parallel, and then mixing them together.
To investigate composition, we start with the following definition.

\begin{definition}
For bipartite graphs $G_1 = (U_1,V_1,E_1)$ and $G_2 = (U_2,V_2,E_2)$, 
the {\bf bipartite union} of $G_1$ and $G_2$, 
is $G_1\uplus G_2$ = $(U_1 \cup U_2, V_1 \cup V_2, E_1 \cup E_2)$.
\end{definition}


\begin{figure}
\begin{center}
\begin{tikzpicture}
[->,>=latex,thick, scale=0.6,
com/.style={draw,text centered, text width=5mm,
shape=rectangle, rounded corners=6pt,
fill=gray!20,font=\footnotesize},
act/.style={draw,text centered, 
shape=rectangle,text width=5mm,
fill=yellow!40,font=\footnotesize}
]
\node[com] (c1) [] at (0,0) {c1};
\node[act] (a1) [] at (2,0) {a1};
\node[com] (c2) [] at (4,0) {c2};
\node[com] (c2a) [] at (8,0) {c2};
\node[act] (a2) [] at (10,0) {a2};
\node[com] (c1a) [] at (12,0) {c1};
\node[] (g1) [] at (3,-1) {$G_1$};
\node[] (g2) [] at (11,-1) {$G_2$};
\path	(c1) edge node[above] {} (a1);
\path	(a1) edge node[above] {} (c2);
\path	(c2a) edge node[above] {} (a2);
\path	(a2) edge node[above] {} (c1a);
\end{tikzpicture}
\begin{tikzpicture}
[->,>=latex,thick, scale=0.6,
com/.style={draw,text centered, text width=5mm,
shape=rectangle, rounded corners=6pt,
fill=gray!20,font=\footnotesize},
act/.style={draw,text centered, 
shape=rectangle,text width=5mm,
fill=yellow!40,font=\footnotesize}
]
\node[com] (c1) [] at (0,1) {c1};
\node[act] (a1) [] at (3,0) {a1};
\node[com] (c2) [] at (6,1) {c2};
\node[act] (a2) [] at (3,2) {a2};
\node[] (g1) [] at (3,-1) {$G_1 \uplus G_2$};
\path	(c1) edge node[above] {} (a1);
\path	(a1) edge node[above] {} (c2);
\path	(c2) edge node[above] {} (a2);
\path	(a2) edge node[above] {} (c1);
\end{tikzpicture}
\end{center}
\caption{\label{fig:union} Even though $G_1$ and $G_2$ are recipes graphs, the bipartite union of them, i.e. $G_1 \uplus G_2$, violates the definition for a recipe graph (because of the acyclic condition).} 
\end{figure}
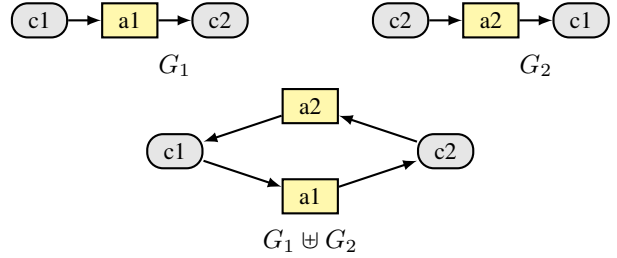

Some simple observations are 
(1) $G \uplus G = G$;
(2) $G_1 \uplus G_2 = G_2 \uplus G_1$;
(3) if $G_1$ is a subgraph of $G_2$, then $G_1 \uplus G_2 = G_2$;
and (4) if $G_1$ and $G_2$ are disjoint, 
then $G_1$ and $G_2$ are disjoint components in $G_1\uplus G_2$.
These seem to be desirable properties, but 
unfortunately the example in Figure \ref{fig:union} shows that bipartite union does not meet our needs for composing recipes since it can result in compositions that are bipartite graphs but not recipe graphs.

For this reason, we use the following definition for composition which we illustrate in Figure \ref{fig:composition:1} and Figure \ref{fig:composition:2}.
Essentially, two recipes $R_1$ and $R_2$ can be composed if the output nodes of $R_1$ overlap with the input nodes of $R_2$, and no other nodes are common between the two recipes. 

\begin{definition}
\label{def:composition}
If $R_1 = (C_1,A_1,E_1,F_1)$ and $R_2 = (C_2,A_2,E_2,F_2)$ are recipes
such that 
(1) $\outputs(R_1) \cap \inputs(R_2) \neq \emptyset$; 
(2) $\intermediates(R_1) \cap \intermediates(R_2) = \emptyset$; 
(3) $\actions(R_1) \cap \actions(R_2) = \emptyset$; 
(4) $\outputs(R_2) \cap \inputs(R_1) = \emptyset$; 
(5) for all $n \in \outputs(R_1) \cap \inputs(R_2), F_1(n) = F_2(n)$; 
and
(6) for all $n \in \comestibles(R_1)\setminus\outputs(R_1)$,
and for all $n' \in \comestibles(R_2)\setminus\inputs(R_2)$,
$F_1(n) \not\simeq F_2(n')$, 
then the {\bf composition} of $R_1$ and $R_2$, 
denoted by $R_1\oplus R_2 = (C,A,E,F)$, 
where
$C = C_1 \cup C_2$; 
$A = A_1 \cup A_2$; 
$E = E_1 \cup E_2$; 
and 
$F = F_1 \cup F_2$, 
otherwise $R_1\oplus R_2 = \bot$
(which denotes a failed composition). 
\end{definition}

We could generalize the definition by changing condition 5 to allow matching of subtypes rather than equality of types. 
We will investigate this more flexible way of combining recipes in future work.

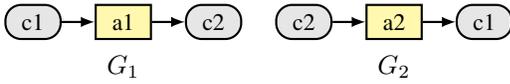
\begin{figure}
\begin{center}
\begin{tikzpicture}
[->,>=latex,thick, scale=0.6,
com/.style={draw,text centered, text width=5mm,
shape=rectangle, rounded corners=6pt,
fill=gray!20,font=\footnotesize},
act/.style={draw,text centered, 
shape=rectangle,text width=5mm,
fill=yellow!40,font=\footnotesize}
]
\node[com] (c1) [] at (0,0) {c1};
\node[act] (a1) [] at (2,0) {a1};
\node[com] (c2) [] at (4,0) {c2};
\node[com] (c2a) [] at (6,0) {c2};
\node[act] (a2) [] at (8,0) {a2};
\node[com] (c1a) [] at (10,0) {c1};
\node[] (g1) [] at (2,-1) {$G_1$};
\node[] (g2) [] at (8,-1) {$G_2$};
\path	(c1) edge node[above] {} (a1);
\path	(a1) edge node[above] {} (c2);
\path	(c2a) edge node[above] {} (a2);
\path	(a2) edge node[above] {} (c1a);
\end{tikzpicture}
\end{center}
\caption{\label{fig:composition:1}For recipes $R_1$ and $R_2$, where $\graph(R_1) = G_1$, 
and $\graph(R_2) = G_2$, $R_1 \oplus R_2 = \bot$ because of violation of Condition 4 in Definition \ref{def:composition}.} 
\end{figure}

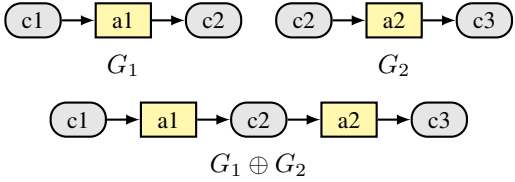
\begin{figure}
\begin{center}
\begin{tikzpicture}
[->,>=latex,thick, scale=0.6,
com/.style={draw,text centered, text width=5mm,
shape=rectangle, rounded corners=6pt,
fill=gray!20,font=\footnotesize},
act/.style={draw,text centered, 
shape=rectangle,text width=5mm,
fill=yellow!40,font=\footnotesize}
]
\node[com] (c1) [] at (0,0) {c1};
\node[act] (a1) [] at (2,0) {a1};
\node[com] (c2) [] at (4,0) {c2};
\node[com] (c2a) [] at (6,0) {c2};
\node[act] (a2) [] at (8,0) {a2};
\node[com] (c1a) [] at (10,0) {c3};
\node[] (g1) [] at (2,-1) {$G_1$};
\node[] (g2) [] at (8,-1) {$G_2$};
\path	(c1) edge node[above] {} (a1);
\path	(a1) edge node[above] {} (c2);
\path	(c2a) edge node[above] {} (a2);
\path	(a2) edge node[above] {} (c1a);
\end{tikzpicture}
\end{center}
\begin{center}
\begin{tikzpicture}
[->,>=latex,thick, scale=0.6,
com/.style={draw,text centered, text width=5mm,
shape=rectangle, rounded corners=6pt,
fill=gray!20,font=\footnotesize},
act/.style={draw,text centered, 
shape=rectangle,text width=5mm,
fill=yellow!40,font=\footnotesize}
]
\node[com] (c1) [] at (0,0) {c1};
\node[act] (a1) [] at (2,0) {a1};
\node[com] (c2) [] at (4,0) {c2};
\node[act] (a2) [] at (6,0) {a2};
\node[com] (c3) [] at (8,0) {c3};
\node[] (g1) [] at (4,-1) {$G_1 \oplus G_2$};
\path	(c1) edge node[above] {} (a1);
\path	(a1) edge node[above] {} (c2);
\path	(c2) edge node[above] {} (a2);
\path	(a2) edge node[above] {} (c3);
\end{tikzpicture}
\end{center}
\caption{\label{fig:composition:2}For recipes $R_1$ and $R_2$, where $\graph(R_1) = G_1$, 
and $\graph(R_2) = G_2$,  $R_1 \oplus R_2 \neq \bot$ 
when $F_1(C_1) = F_2(C_3)$} 
\end{figure}

Importantly, if the conditions of composition in Definition \ref{def:composition} are satisfied, 
then composition is a recipe.

\begin{proposition}
For recipe graphs $R_1$ and $R_2$,
if $R_1 \oplus R_2 \neq \bot$,
then $R_1 \oplus R_2$ is a recipe graph.
\end{proposition}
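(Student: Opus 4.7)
The plan is to check each of the five conditions of the recipe graph definition in turn, showing that they are preserved under $\oplus$ once the six composition conditions of Definition \ref{def:composition} hold. Conditions (1), (2) and (4) follow almost immediately: since $R_1$ and $R_2$ each have nonempty comestible and action sets, so does the union; since ${\cal C}\cap{\cal A}=\emptyset$ and $A_1\cap A_2=\emptyset$ by composition condition (3), arcs from $E_1\cup E_2$ still lie in $(C\times A)\cup (A\times C)$; and every action node retains the incoming and outgoing arcs it had in whichever recipe it came from, because the action sets are disjoint.

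For condition (5), I would argue by case analysis on a comestible node $n\in C$. If $n\in C_1\setminus C_2$ (symmetrically $C_2\setminus C_1$), the bound on its in-degree is inherited directly. The delicate case is $n\in C_1\cap C_2$. Here I would use composition conditions (2), (4) and (6) to rule out all shared-node possibilities except $\inputs(R_1)\cap\inputs(R_2)$ and $\outputs(R_1)\cap\inputs(R_2)$: in particular, if $n$ were an intermediate or output in $R_2$, then the well-definedness of $F = F_1\cup F_2$ would force $F_1(n)=F_2(n)$, hence $F_1(n)\simeq F_2(n)$, contradicting condition (6). In both remaining cases the total in-degree of $n$ is at most one, because as an input of $R_2$ it has no incoming arcs from $E_2$.

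Condition (3) is the main obstacle. Connectedness is easy: composition condition (1) guarantees at least one node $n^\star\in\outputs(R_1)\cap\inputs(R_2)$, and this node bridges the two otherwise connected subgraphs. Acyclicity is the delicate part. Here I would argue by contradiction: suppose $\pi$ is a directed cycle in $R_1\oplus R_2$. Since $R_1$ and $R_2$ are each acyclic, $\pi$ must contain arcs from both $E_1$ and $E_2$, hence must traverse shared comestible nodes. But by the case analysis above, every shared node $n$ lies in $\inputs(R_2)$, so its outgoing arcs in the composition go only to $A_2$, and its incoming arcs come only from $A_1$ (since $n\notin\intermediates(R_2)\cup\outputs(R_2)$). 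Therefore $\pi$ enters every shared node from the $R_1$ side and leaves toward the $R_2$ side, meaning the cycle would eventually require an arc from an $R_2$-node back to an $R_1$-node. Such an arc would need either a shared action (ruled out by condition (3)) or a comestible in $\outputs(R_2)\cap\inputs(R_1)$ (ruled out by condition (4)), giving the contradiction.

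The hard part will be making the acyclicity argument precise: formally one should define the ``side'' of each arc of $\pi$ (whether it belongs to $E_1$ or $E_2$), track transitions between sides, and show that transitions can only occur in one direction across the shared frontier. Once that lemma is in place, the rest of the proof is essentially bookkeeping, and the statement that $R_1\oplus R_2$ is a recipe graph follows.
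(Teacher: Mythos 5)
Your overall route is the same as the paper's: verify the recipe-graph conditions one by one from the composition conditions (1)--(6) of Definition~\ref{def:composition}. The paper's own proof is a four-line sketch --- bipartiteness from the unions, connectedness from condition (1), acyclicity asserted from (2)--(4), and well-formedness of $F$ from (5) --- and you are attempting to supply exactly the case analysis it omits. Your handling of recipe-graph conditions (1), (2) and (4), your in-degree bookkeeping at the glue nodes $\outputs(R_1)\cap\inputs(R_2)$, and the skeleton of your acyclicity argument (a directed cycle must transition from the $E_2$-side back to the $E_1$-side at a shared node with an incoming $E_2$-arc and an outgoing $E_1$-arc, and such transition points are excluded) are all in the right spirit and more rigorous than what the paper prints.

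However, your central lemma --- that conditions (2), (4) and (6) confine shared comestible nodes to $\inputs(R_1)\cap\inputs(R_2)$ and $\outputs(R_1)\cap\inputs(R_2)$ --- has a genuine gap, in two places. First, condition (6) quantifies only over $n\in\comestibles(R_1)\setminus\outputs(R_1)$, so it says nothing about shared nodes lying in $\outputs(R_1)$: a node $n\in\outputs(R_1)\cap\outputs(R_2)$ (or $\outputs(R_1)\cap\intermediates(R_2)$) is excluded by none of (1)--(6), yet it acquires in-degree $2$ in $R_1\oplus R_2$ (one incoming arc from each $E_i$, from distinct actions by (3)), violating recipe-graph condition (5); one can build a concrete instance satisfying all six composition conditions, so your classification is false as derived. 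Second, your use of (6) for the cases it does cover is circular: you argue that well-definedness of $F=F_1\cup F_2$ forces $F_1(n)=F_2(n)$ on a shared node and then contradict (6), but $R_1\oplus R_2\neq\bot$ is determined solely by conditions (1)--(6), and condition (5) requires agreement of $F_1$ and $F_2$ only on $\outputs(R_1)\cap\inputs(R_2)$ --- nothing entitles you to assume agreement elsewhere. What is actually needed, and what the paper's terse proof silently assumes, is the stronger reading that the recipes overlap only at the interface, i.e.\ $\nodes(R_1)\cap\nodes(R_2)\subseteq\outputs(R_1)\cap\inputs(R_2)$ (possibly also allowing shared inputs). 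With that assumption made explicit, both your in-degree analysis and your side-transition acyclicity argument go through cleanly; without it, neither your lemma nor, strictly speaking, the proposition itself under the paper's literal definitions can be established. You correctly sensed that node sharing is the crux --- the paper ignores it entirely --- but the repair must come as an added hypothesis, not from conditions (2), (4), (6) alone.
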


\begin{proof}
Assume
and $R_1$ and $R_2$ are recipes
and $R_1\oplus R_2 \neq \bot$.
So $R_1$ and $R_2$ satisfy conditions 1 to 5 in Definition \ref{def:composition}.
Since $C = C_1 \cup C_2$, $A = A_1 \cup A_2$, and $E = E_1 \cup E_2$,
$(C,A,E)$ is a bipartite graph.
Since $\outputs(R_1) \cap \inputs(R_2) \neq \emptyset$, 
the graph is connected.
Since $\intermediates(R_1) \cap \intermediates(R_2) = \emptyset$, 
$\actions(R_1) \cap \actions(R_2) = \emptyset$, 
and $\outputs(R_2) \cap \inputs(R_1) = \emptyset$,
the graph is acyclic.
Since for all $n \in \outputs(R_1) \cap \inputs(R_2), F_1(n) = F_2(n)$, 
$F = F_1 \cup F_2$ is well-formed.
So $R_1 \oplus R_2$ is a recipe. 
\end{proof}

As a consequence of the definition of composition, we get the following equivalences for the nodes.

\begin{proposition}
For recipes $R_1$ and $R_2$, 
if $R_1\oplus R_2 \neq\bot$, 
the following hold:
(1) $\inputs(R_1\oplus R_2) = (\inputs(R_1) \cup \inputs(R_2))\setminus (\outputs(R_1) \cap \inputs(R_2))$; 
(2) $\intermediates(R_1\oplus R_2) = (\outputs(R_1) \cap \inputs(R_2)) \cup \intermediates(R_1) \cup \intermediates(R_2)$; 
and
(3) $\outputs(R_1\oplus R_2) = (\outputs(R_1) \cup \outputs(R_2))\setminus (\outputs(R_1) \cap \inputs(R_2))$.
\end{proposition}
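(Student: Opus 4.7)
The plan is to characterize each of the three sets by tracking the in-degree and out-degree of every comestible node in the composed graph. Since $\arcs(R_1\oplus R_2) = E_1 \cup E_2$ and $\comestibles(R_1\oplus R_2) = \comestibles(R_1)\cup \comestibles(R_2)$, for any $n$ in the latter the incoming (resp.\ outgoing) arcs at $n$ in $R_1\oplus R_2$ are exactly the union of its incoming (resp.\ outgoing) arcs in whichever of $R_1,R_2$ contains it. So $n$ is an input of the composition iff it has in-degree zero in every $R_i$ containing it, an output iff it has out-degree zero in every such $R_i$, and an intermediate otherwise.

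For part~(1) I would argue by double inclusion. The forward direction is immediate: if $n$ has no incoming arc in $R_1\oplus R_2$, then $n$ has no incoming arc in either source recipe, placing it in $\inputs(R_1)\cup\inputs(R_2)$; and it cannot lie in $\outputs(R_1)\cap\inputs(R_2)$, since such a node inherits an incoming arc from $R_1$. For the reverse, take $n$ in the right-hand side and rule out a stray incoming arc from the recipe $n$ does not originate in. The crucial step is this: if $n \in \inputs(R_1)\cap\comestibles(R_2)$, then $F_1(n)=F_2(n)$, so reflexivity of $\simeq$ together with condition~6 of Definition~\ref{def:composition} forces $n \in \inputs(R_2)$, while condition~4 rules out $n \in \outputs(R_2)$; hence $n$ has in-degree zero in $R_2$ as well. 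The symmetric case is analogous, and part~(3) is obtained by the dual argument appealing to conditions~4 and~6 in their mirror-image form.

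For part~(2) I would take complements inside $\comestibles(R_1)\cup\comestibles(R_2)$, exploiting that for each $R_i$ the sets $\inputs(R_i)$, $\intermediates(R_i)$, $\outputs(R_i)$ partition $\comestibles(R_i)$. The expression $(\outputs(R_1)\cap\inputs(R_2))\cup\intermediates(R_1)\cup\intermediates(R_2)$ then emerges naturally: each node in $\outputs(R_1)\cap\inputs(R_2)$ acquires an incoming arc from $R_1$ and an outgoing arc from $R_2$ and hence becomes intermediate; meanwhile, every pre-existing $\intermediates(R_i)$-node already carries both kinds of arc in its own recipe, and conditions~2 and~6 prevent spurious additional arcs from the other recipe that could change its classification.

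The main obstacle is the bookkeeping around shared nodes. Conditions~2, 4 and~6 of Definition~\ref{def:composition} each eliminate a particular flavour of overlap between $\comestibles(R_1)$ and $\comestibles(R_2)$, and they must be invoked at precisely the points where a classification could go wrong—chiefly when demonstrating that an input of one recipe is not secretly an output or intermediate of the other, and dually for outputs. Once this case analysis is laid out cleanly, each of the three equations reduces to a short set-theoretic manipulation.
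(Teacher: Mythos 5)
You should know at the outset that the paper states this proposition without proof (it is introduced only with ``As a consequence of the definition of composition\ldots''), so there is no official argument to compare against; your attempt has to stand on its own. Your overall plan---classifying each comestible node of $R_1\oplus R_2$ by its in- and out-degree, which is additive over $E_1\cup E_2$, and then proving each equation by double inclusion (taking complements against the partition $\inputs/\intermediates/\outputs$ for part (2))---is the natural and correct one; your forward inclusions, your part (2), and the stated direction of your ``crucial step'' are all sound. One small point: your assertion that $F_1(n)=F_2(n)$ for every shared node is guaranteed by Definition \ref{def:composition} only on $\outputs(R_1)\cap\inputs(R_2)$ (condition 5); elsewhere it is forced only by the implicit requirement that $F=F_1\cup F_2$ be a function, and you should say that explicitly since your use of condition 6 depends on it.

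The genuine gap is the sentence ``the symmetric case is analogous'': it is not, because condition 6 is asymmetric---it exempts $\outputs(R_1)$ as first arguments and $\inputs(R_2)$ as second arguments. Your crucial step correctly rules out $n\in\inputs(R_1)\cap\intermediates(R_2)$ via the pair $(n,n)$, but for $n\in\intermediates(R_1)\cap\inputs(R_2)$ the pair $(n,n)$ is not quantified over by condition 6 (its second coordinate is an input of $R_2$), and no other condition excludes this overlap. Concretely, let $R_1$ be $c_1\to a_1\to c_2\to a_2\to c_3$ and let $R_2$ have $\inputs(R_2)=\{c_2,c_3\}$ both feeding a single action $a_3$ with output $c_4$, with pairwise $\simeq$-incomparable comestible types and $F_1,F_2$ agreeing on $c_2,c_3$. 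All six conditions of Definition \ref{def:composition} hold, and $R_1\oplus R_2$ is even a bona fide recipe, yet $c_2$ lies in $(\inputs(R_1)\cup\inputs(R_2))\setminus(\outputs(R_1)\cap\inputs(R_2))$ while carrying the incoming arc $(a_1,c_2)$, so equation (1) fails; a mirror-image example with $n\in\outputs(R_1)\cap\intermediates(R_2)$ breaks equation (3) (and there condition 6 fails to apply because the first coordinate is an output of $R_1$). So the obstruction is not your degree bookkeeping but the proposition itself, which is false under a literal reading of the definition. It becomes true---and your argument then closes without the ``analogous'' hand-wave---once one adds the assumption stated in the paper's prose just before Definition \ref{def:composition} but missing from its formal conditions, namely that $\nodes(R_1)\cap\nodes(R_2)\subseteq\outputs(R_1)\cap\inputs(R_2)$, i.e.\ the only shared nodes are the glue nodes. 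A careful write-up should either prove that containment from the conditions (it cannot be done, as the example shows) or flag it as an extra hypothesis.
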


\begin{figure}
\begin{center}
\begin{tikzpicture}
[->,>=latex,thick, scale=0.6,
com/.style={draw,text centered, text width=25mm,
shape=rectangle, rounded corners=2pt,
fill=gray!20,font=\footnotesize},
act/.style={draw,text centered, 
shape=rectangle,text width=10mm,
fill=yellow!40,font=\footnotesize}
]
\node (r1) at (0,0) {R1};
\node[com] (c1) [text width=15mm] at (2,0) {c1: tomato};
\node[act] (a1) [text width=15mm] at (6,0)  {a1: chop};
\node[com] (c2) [text width=27mm] at (11,0) {c2: chopped tomato};
\path	(c1) edge node[above] {} (a1);
\path	(a1) edge node[above] {} (c2);
\end{tikzpicture}

\vspace{3mm}

\begin{tikzpicture}
[->,>=latex,thick, scale=0.6,
com/.style={draw,text centered, text width=25mm,
shape=rectangle, rounded corners=2pt,
fill=gray!20,font=\footnotesize},
act/.style={draw,text centered, 
shape=rectangle,text width=10mm,
fill=yellow!40,font=\footnotesize}
]
\node (r2) at (0,0) {R2};
\node[com] (c3) [text width=15mm] at (2,0) {c3: lettuce};
\node[act] (a2) [text width=15mm] at (6,0)  {a2: chop};
\node[com] (c4) [text width=27mm] at (11,0) {c4: chopped lettuce};
\path	(c3) edge node[above] {} (a2);
\path	(a2) edge node[above] {} (c4);
\end{tikzpicture}

\vspace{3mm}

\begin{tikzpicture}
[->,>=latex,thick, scale=0.6,
com/.style={draw,text centered, text width=25mm,
shape=rectangle, rounded corners=2pt,
fill=gray!20,font=\footnotesize},
act/.style={draw,text centered, 
shape=rectangle,text width=10mm,
fill=yellow!40,font=\footnotesize}
]
\node (r3) at (-3,0.75) {R3};
\node[com] (c2) [text width=28mm] at (0,1.5) {c2: chopped tomato};
\node[com] (c4) [text width=28mm] at (0,0) {c4: chopped lettuce};
\node[act] (a3) [text width=15mm] at (5,0.75)  {a3: mix};
\node[com] (c5) [text width=15mm] at (9,0.75) {c5: salad};
\path	(c2) edge node[above] {} (a3);
\path	(c4) edge node[above] {} (a3);
\path	(a3) edge node[above] {} (c5);
\end{tikzpicture}

\end{center}
\caption{\label{fig:nonassociative}. For recipes $R1$, $R2$, and $R3$, $R1 \oplus (R2 \oplus R3)$ is a valid recipe but $(R1 \oplus R2) = \bot$ and so $(R1 \oplus R2) \oplus R3$ is not valid. Hence, $\oplus$ is not associative.}
\end{figure}
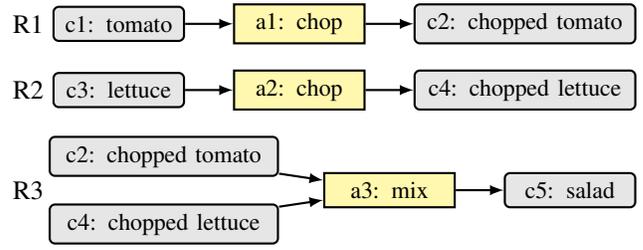

Unlike bipartite union, $\oplus$ is not commutative, nor associative (as illustrated in Figure \ref{fig:nonassociative}).
Furthermore, if $R_1 \oplus R_2 \neq \bot$, 
then $R_2 \oplus R_1 = \bot$.
Also if $R_1$ and $R_2$ are disjoint (i.e. $\nodes(R_1) \cap \nodes(R_2) = \emptyset$), 
then $R_1 \oplus R_2 = \bot$ 
and $R_2 \oplus R_1 = \bot$.

Any recipe can be composed from one or more atomic recipes (i.e. recipes with an atomic recipe graph).
To show this, we use the ${\sf Compose}$ function which for a set of 
recipes gives the closure under the $\oplus$ operator.
We define this as follows:
For any set of recipes $\{R_1,\ldots,R_n\}$, 
and for any recipes $R, R' \in \compose(\{R_1,\ldots,R_n\})$, 
$R\oplus R' \in \compose(\{R_1,\ldots,R_n\})$.

\begin{proposition}
For all recipes $R$, there is a set of atomic recipes $\{R_1,\ldots,R_n\}$
s.t. $R \in \compose(\{R_1,\ldots,R_n\})$.
\end{proposition}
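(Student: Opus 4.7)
The plan is to argue by strong induction on $|\actions(R)|$. For the base case $|\actions(R)| = 1$, the recipe $R$ is itself atomic, so taking $\{R_1,\ldots,R_n\} = \{R\}$ gives $R \in \compose(\{R\})$ trivially.

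For the inductive step, assume the claim holds for all recipes with fewer action nodes, and let $R = (C,A,E,F)$ with $|A| \geq 2$. Since $\graph(R)$ is acyclic, there exists a topologically maximal action $a \in A$, that is, an action whose output comestibles all lie in $\outputs(R)$. I would then carve $R$ into two pieces: let $R_a$ be the atomic subrecipe consisting of $a$ together with its input and output comestibles and the incident arcs (with $F$ restricted appropriately), and let $R'$ be the subrecipe of $R$ obtained by deleting $a$ and its output comestibles (keeping every input of $a$, since those may still be consumed by other actions in $R'$).

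Next, I would check that $R'$ is a valid recipe: the remaining graph is still bipartite and acyclic; the in-degree condition on comestibles survives deletion; and every retained action keeps its incoming and outgoing arcs because we only removed $a$ and nodes downstream of $a$. Connectedness of $R'$ needs care, since an input of $a$ that is not shared with any other action would become isolated; if that happens, it simply means $R'$ has one more input node (the old input of $a$, now promoted to an ingredient of $R'$), provided that node is still attached to the rest, which it is by hypothesis. Then I would verify $R = R' \oplus R_a$ by going through the six clauses of Definition \ref{def:composition}: clauses 2 and 3 hold because $R_a$ contains no intermediates and shares no action with $R'$; clause 4 holds because $a$'s outputs are fresh final outputs of $R$, not inputs of anything in $R'$; clause 5 holds because $F'$ and $F_a$ are both restrictions of the same $F$; and clause 6 is inherited from the no-same-branch requirement already built into the typing function of $R$. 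The inductive hypothesis then yields atomic recipes $\{R_1,\ldots,R_{n-1}\}$ with $R' \in \compose(\{R_1,\ldots,R_{n-1}\})$, so $R = R' \oplus R_a \in \compose(\{R_1,\ldots,R_{n-1},R_a\})$.

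The main obstacle is clause 1 of the composition definition, namely $\outputs(R') \cap \inputs(R_a) \neq \emptyset$: this forces at least one input of $a$ to be an intermediate produced by some other action in $R$. If $a$ is chosen naively and all of its inputs happen to be ingredients of $R$ shared only through the ingredient layer with the rest of the graph, the decomposition above collapses. The refinement I would use is to pick $a$ not just topologically last but among topologically last actions having at least one intermediate input; acyclicity plus the existence of intermediates somewhere in the graph when $|A| \geq 2$ is what makes such a choice possible, and handling the purely parallel cases (where several actions share only ingredients) may require first composing action-chains that do overlap and then joining them as allowed by $\oplus$. Once a suitable $a$ is secured, the remaining clauses are routine verifications from the construction, and the induction closes.
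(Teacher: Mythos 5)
Your induction is the right instinct and your clause-by-clause verification of (2)--(6) for the split $R = R' \oplus R_a$ is essentially sound, but the obstacle you flag at the end---clause (1)---is not a loose end that your refinement can tighten: it is fatal, and the premise behind your fix is false. You claim that $|A|\geq 2$ plus acyclicity guarantees an intermediate node somewhere; it does not, because connectedness can be achieved entirely through a \emph{shared ingredient}. Take $C=\{c_1,c_2,c_3\}$, $A=\{a_1,a_2\}$, $E=\{(c_1,a_1),(a_1,c_2),(c_1,a_2),(a_2,c_3)\}$, with pairwise non-similar types (e.g.\ raw carrot, chopped carrot, grated carrot). This is a valid recipe with no intermediates. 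Any decomposition of it must split the actions as $\{a_1\}$ versus $\{a_2\}$, and connectedness forces the two parts to be exactly the atomic subrecipes around $a_1$ and $a_2$; but then $\outputs(R_a)\cap\inputs(R_b)=\{c_2\}\cap\{c_1\}=\emptyset$ and symmetrically in the other order, so every composition yields $\bot$ and $\compose(\{R_a,R_b\})$ contains only the two atomic pieces. Your fallback of ``first composing action-chains that do overlap and then joining them'' cannot rescue this: two subrecipes sharing only an ingredient can \emph{never} satisfy clause (1) in either order. So under the paper's definitions this fan-out recipe is a counterexample to the proposition itself, not merely to your proof. For comparison, the paper's own proof is a three-line sketch (one atomic recipe per action node, then the bare assertion $R\in\compose(\{R_1,\ldots,R_n\})$) which silently skips exactly the step where you got stuck; your attempt is the more honest one, and the point where it stalls exposes a genuine defect in the result as stated, which would need either an amended $\oplus$ (e.g.\ permitting composition across shared inputs) or a restriction to recipes whose actions are linked through intermediates.

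There is also a second, independent flaw in your specific peeling order, which is repairable but worth naming: deleting a topologically last action $a$ together with its outputs can disconnect $R'$ even when intermediates abound. If $a$ merges two chains (say $c_1\rightarrow a_1\rightarrow c_2\rightarrow a$ and $c_3\rightarrow a_2\rightarrow c_4\rightarrow a$, with $a$ producing $c_5$), then removing $a$ and $c_5$ leaves two components, so $R'$ violates the connectedness condition and is not a recipe; your brief discussion of isolated inputs does not cover this. Here the cure is to change the direction of the induction: that recipe \emph{is} obtainable as $R_{a_2}\oplus(R_{a_1}\oplus R_a)$, i.e.\ one sometimes has to peel a topologically first action, or equivalently grow a connected core by attaching one atomic piece at a time along an output-to-input arc, rather than always splitting off the last action. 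Fixing the induction order in this way handles all recipes in which every pair of adjacent actions is bridged by an intermediate, but it does nothing for the fan-out counterexample above, which is where the statement itself---and the paper's proof of it---breaks.
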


\begin{proof}
Since $R$ is a recipe, it is a labelled connected acyclic bipartite graph.
For each action node, the inputs and outputs correspond to an atomic recipe.
So for each action, there is a corresponding atomic recipe $R_i\in\{R_1,\ldots,R_n\}$.
So $R \in \compose(\{R_1,\ldots,R_n\})$.
\end{proof}

Only a finite number of recipe graphs can be composed from a finite set of atomic recipe graphs
since infinite sequences cannot be formed.

\begin{example}
Consider the first two atomic recipes below. 
We can compose them so that we start with fresh peas and finish with thawed peas. 
\begin{center}
\begin{tikzpicture}
[->,>=latex,thick, scale=0.6,
com/.style={draw,text centered, text width=25mm,
shape=rectangle, rounded corners=2pt,
fill=gray!20,font=\footnotesize},
act/.style={draw,text centered, 
shape=rectangle,text width=10mm,
fill=yellow!40,font=\footnotesize}
]
\node[com] (c1) [text width=35mm] at (0,3) {\rm c1: fresh peas};
\node[act] (a1) [text width=35mm] at (0,1.5)  {\rm a1: put in freezer};
\node[com] (c2) [text width=35mm] at (0,0) {\rm c2: frozen peas};
\path	(c1) edge node[above] {} (a1);
\path	(a1) edge node[above] {} (c2);
\node[com] (c1r) [text width=35mm] at (7,3) {\rm c2: frozen peas};
\node[act] (a1r) [text width=35mm] at (7,1.5)  {\rm a2: take out of freezer};
\node[com] (c2r) [text width=35mm] at (7,0) {\rm c3: thawed peas};
\path	(c1r) edge node[above] {} (a1r);
\path	(a1r) edge node[above] {} (c2r);
\end{tikzpicture}
\end{center}
We could extend the example with the following atomic recipe on the left to obtain a recipe for refrozen peas. But we would not be able to extend it further with the atomic recipe on the right because it would violate the conditions for composition (because two comestible nodes would have the same type). 
\begin{center}
\begin{tikzpicture}
[->,>=latex,thick, scale=0.6,
com/.style={draw,text centered, text width=25mm,
shape=rectangle, rounded corners=2pt,
fill=gray!20,font=\footnotesize},
act/.style={draw,text centered, 
shape=rectangle,text width=10mm,
fill=yellow!40,font=\footnotesize}
]
\node[com] (c1) [text width=35mm] at (0,3) {\rm c3: thawed peas};
\node[act] (a1) [text width=35mm] at (0,1.5)  {\rm a3: put in freezer};
\node[com] (c2) [text width=35mm] at (0,0) {\rm c4: refrozen peas};
\path	(c1) edge node[above] {} (a1);
\path	(a1) edge node[above] {} (c2);
\node[com] (c1r) [text width=35mm] at (7,3) {\rm c4: refrozen peas};
\node[act] (a1r) [text width=35mm] at (7,1.5)  {\rm a4: take out of freezer};
\node[com] (c2r) [text width=35mm] at (7,0) {\rm c5: thawed peas};
\path	(c1r) edge node[above] {} (a1r);
\path	(a1r) edge node[above] {} (c2r);
\end{tikzpicture}
\end{center}
Note, for this example, we assume that fresh peas, frozen peas, thawed peas, and refrozen peas are in different branches of the type hierarchy for comestibles. 
\end{example}

\begin{proposition}
For any finite set of atomic recipes $\{R_1,\ldots,R_n\}$, 
the set $\compose(\{R_1,\ldots,R_n\})$ is finite.
\end{proposition}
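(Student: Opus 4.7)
The plan is to show that every recipe in $\compose(\{R_1,\ldots,R_n\})$ is built from ingredients taken from a fixed finite pool, so there can only be finitely many of them. Concretely, I will let $N = \bigcup_{i=1}^n \nodes(R_i)$ and $E^* = \bigcup_{i=1}^n \arcs(R_i)$, both of which are finite because each $R_i$ is a finite graph and there are finitely many of them.

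First, I would prove by induction on the closure construction that for every $R \in \compose(\{R_1,\ldots,R_n\})$, we have $\nodes(R) \subseteq N$ and $\arcs(R) \subseteq E^*$. The base case is immediate: each $R_i$ satisfies this by definition of $N$ and $E^*$. For the inductive step, if $R, R' \in \compose(\{R_1,\ldots,R_n\})$ both satisfy the containment and $R \oplus R' \neq \bot$, then by Definition \ref{def:composition} the composition uses $C = C_R \cup C_{R'}$, $A = A_R \cup A_{R'}$, and $E = \arcs(R)\cup\arcs(R')$, all of which remain inside $N$ and $E^*$. If $R \oplus R' = \bot$ nothing new is added.

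Next I would observe that since $N$ and $E^*$ are finite, there are only finitely many graphs $(C,A,E)$ with $C \cup A \subseteq N$ and $E \subseteq E^*$, and hence only finitely many recipe graphs can occur as $\graph(R)$ for $R \in \compose(\{R_1,\ldots,R_n\})$. It then remains to argue that the typing function is determined by the underlying graph: again by induction on the composition construction, $\type(R)$ on each node $n \in \nodes(R)$ equals $\type(R_i)(n)$ for whichever atomic $R_i$ originally contained $n$. Condition 5 of Definition \ref{def:composition} guarantees consistency whenever a node appears in more than one atomic recipe, so the typing is well-defined and unique for each admissible graph.

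The main obstacle is really bookkeeping rather than a conceptual difficulty: one must be careful to verify that the closure operation never introduces nodes or arcs outside of $N$ and $E^*$, and that the typing is forced rather than a free choice. Once these two inductive facts are established, the finiteness of $\compose(\{R_1,\ldots,R_n\})$ follows immediately by bounding it by the number of pairs (sub-bipartite-graph of $N$ with arcs in $E^*$, induced typing), which is finite.
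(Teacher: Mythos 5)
Your proof is correct, but it takes a genuinely different route from the paper's. The paper's proof bounds repetition: it observes that if $R' \sqsubseteq R$ then condition 1 of Definition \ref{def:composition} must fail (an input of a subrecipe can never be an output of the ambient recipe, since an input node of a connected subrecipe must have an outgoing arc, which survives into $R$), so $R \oplus R' = \bot$; since every composite contains its constituent atomic recipes as subrecipes, each $R_i$ is used at most once in any element of $\compose(\{R_1,\ldots,R_n\})$, and only finitely many recipes can be assembled from $n$ building blocks without repetition. You instead exploit the fact that $\oplus$ is componentwise union and so never mints fresh material: every element of the closure has nodes in $N$, arcs in $E^*$, and a typing contained in $\bigcup_i \type(R_i)$ viewed as a set of node--type pairs, so the closure injects into a finite set of tuples. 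Your argument is more elementary and more robust: it uses none of conditions 1--6 of Definition \ref{def:composition}, whereas the paper's ``used at most once'' claim tacitly relies on a subrecipe lemma ($R_i \sqsubseteq R$ for any composite built using $R_i$, which itself needs condition 3 to exclude cross-arcs between shared nodes), and your counting would survive even a relaxed composition operator so long as it remains a union. What the paper's route buys in exchange is a sharper structural fact --- composition can never fold back through an already-used component, giving an explicit bound in terms of subsets of $\{R_1,\ldots,R_n\}$ --- which a raw cardinality bound does not reveal. One caveat on your side: the claim that the typing is \emph{uniquely} forced by the underlying graph is an overstatement. Condition 5 only enforces agreement on $\outputs(R_1)\cap\inputs(R_2)$; two atomic recipes may share, say, an input node typed differently, or have identical graphs with different labellings, so the same graph can in principle carry different typings in the closure. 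But uniqueness is not needed for your conclusion: your induction already shows every typing in the closure is a union of restrictions of the finitely many $\type(R_i)$, so there are finitely many candidate typings per graph, and finiteness follows.
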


\begin{proof}
Assume $\{R_1,\ldots,R_n\}$ is finite. 
The constraints on composition ensure that 
if $R'$ is a subrecipe of $R$,
then $R \oplus R'$ is not in $\compose(\{R_1,\ldots,R_n\})$.
So for each $R$ in $\compose(\{R_1,\ldots,R_n\})$,
each $R_i \in \{R_1,\ldots,R_n\}$
is used at most once.
Hence, each $R \in \compose(\{R_1,\ldots,R_n\})$
is composed from a finite number of subrecipes,
where none is used more than once.
Hence, there is a finite number of recipes in $\compose(\{R_1,\ldots,R_n\})$.
\end{proof}


So far, we have considered how to compose subrecipes into recipes.
Now, we consider the inverse process for turning recipes into atomic subrecipes.
For a recipe $R$, the {\bf decompose function}, denoted $\decompose(R)$, returns the set $\{ R' \mid R' \sqsubseteq R \mbox{ and } R' \mbox{ is atomic} \}$.

\begin{proposition}
For any $R$, if $R$ is a recipe, then $R \in \compose(\decompose(R))$.
\end{proposition}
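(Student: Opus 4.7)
I would prove this by strong induction on the number of action nodes $k = |\actions(R)|$. The base case $k = 1$ is immediate: $R$ is atomic, lies in $\decompose(R)$, and therefore in $\compose(\decompose(R))$ by the base of the closure definition. For the inductive step, the first observation is that $\decompose(R)$ contains one atomic subrecipe $R_a$ for each action node $a \in \actions(R)$: namely the recipe consisting of $a$, its adjacent comestibles in $\graph(R)$, the connecting arcs, and the inherited typing.

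For $k > 1$, I would ``peel off'' a single action. Specifically, I would seek an action $a^\ast$ of $R$ that is either a source (all its inputs are ingredients of $R$) or a sink (all its outputs are final products of $R$), chosen so that removing $a^\ast$ together with any comestible it leaves isolated yields a connected recipe $R^\ast$ with $k-1$ action nodes. Such an action exists by a topological argument on the DAG $\graph(R)$. All recipe-graph conditions on $R^\ast$ (bipartiteness, acyclicity, degree conditions, typing constraint) are inherited from $R$.

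I would then show $R = R_{a^\ast} \oplus R^\ast$ in the source case and $R = R^\ast \oplus R_{a^\ast}$ in the sink case. The six conditions of Definition \ref{def:composition} hold as follows. Condition 1 holds because some output of $a^\ast$ is an input to an action in $R^\ast$ (source case) or some input of $a^\ast$ is produced by an action in $R^\ast$ (sink case); otherwise $a^\ast$ would be disconnected from the rest of $R$. Conditions 2 and 3 follow from $R_{a^\ast}$ being atomic and $a^\ast$ not appearing in $R^\ast$. Condition 4 follows from the source/sink property: a source's inputs are ingredients of $R$ and hence never appear as outputs of $R^\ast$; a sink's outputs are final products and hence never appear as inputs of $R^\ast$. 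Condition 5 is trivial since both typing functions restrict the single $F$ of $R$. Condition 6 follows from Definition \ref{def:recipe}'s constraint that distinct comestibles of $R$ have non-$\simeq$-comparable types. Finally, the inductive hypothesis gives $R^\ast \in \compose(\decompose(R^\ast))$; since $\decompose(R^\ast) \subseteq \decompose(R)$ (the atomic subrecipes around surviving actions are unaffected by removing $a^\ast$) and $R_{a^\ast} \in \decompose(R)$, closure of $\compose$ under $\oplus$ yields $R \in \compose(\decompose(R))$.

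The main obstacle is the existence claim for the peel-off action $a^\ast$ together with the simultaneous verification of the composition conditions. Several subtleties arise here: an action may be both source and sink, creating a parallel rather than sequential relationship with the remainder; two actions may share an ingredient in a way that complicates the output-input overlap required by condition 1; and the choice between $R_{a^\ast} \oplus R^\ast$ and $R^\ast \oplus R_{a^\ast}$ is forced by whichever direction satisfies condition 1 without violating condition 4. A careful case analysis on the DAG topology of $\graph(R)$ is required to handle these subtleties; once this structural lemma is secured, the induction step and closure argument complete the proof routinely.
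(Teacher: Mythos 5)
Your strategy is essentially a rigorous rendering of the paper's own proof: the paper merely asserts that any recipe ``can be (de)composed in the form of a binary tree where each leaf is an atomic recipe graph,'' and your induction that peels off one source or sink action at a time is the natural caterpillar-shaped instance of that tree, with the composition conditions checked explicitly. The difficulty is that the step you defer --- the existence of a peel-off action $a^\ast$ for which one of $R_{a^\ast}\oplus R^\ast$ or $R^\ast\oplus R_{a^\ast}$ is defined --- is not a routine topological fact; it is false as stated, and the subtlety you yourself flag (an action that is simultaneously source and sink, attached to the rest of the recipe only through a shared ingredient) is not repairable by case analysis. Concretely, take $C=\{c_0,c_1,c_2\}$, $A=\{a_1,a_2\}$, $E=\{(c_0,a_1),(a_1,c_1),(c_0,a_2),(a_2,c_2)\}$ with pairwise $\simeq$-incomparable types. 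This is a legitimate recipe: the definition bounds only the \emph{in}-degree of comestible nodes, so $c_0$ may feed both actions, and the graph is connected through $c_0$. Its atomic subrecipes are exactly those built around $a_1$ and around $a_2$, and for these, $\outputs(\cdot)\cap\inputs(\cdot)=\emptyset$ in both orders, so condition 1 of Definition~\ref{def:composition} fails and every composition is $\bot$. Hence no choice of $a^\ast$ works for this $R$ --- indeed $R\notin\compose(\decompose(R))$ at all --- so the structural lemma on which your induction rests cannot hold in the generality you claim, and ``exists by a topological argument'' is precisely the unproven, and untrue, load-bearing step.

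In fairness, the paper's one-paragraph proof founders on exactly the same example: the binary-tree claim is asserted without justification and fails whenever connectivity of $\graph(R)$ passes through a comestible of out-degree at least two linking otherwise produce--consume-independent parts (the same pathology also defeats any balanced split into two connected halves, since each half must itself be a recipe and condition 1 still demands a produce--consume overlap). Your argument could plausibly be completed under an added hypothesis such as ``every comestible node has out-degree at most one,'' which forces the connectivity of $\graph(R)$ to be realized by produce--consume arcs between actions, or under a relaxed composition operator whose condition 1 tolerates shared inputs. The remainder of your verification is sound --- conditions 2, 3 and 5 are immediate, condition 6 does follow from the typing constraint of Definition~\ref{def:recipe} because all nodes carry the single typing function of $R$, and the closure bookkeeping via $\decompose(R^\ast)\subseteq\decompose(R)$ is correct --- but as submitted the proposal is incomplete in the same place the paper's proof is, and the missing piece is not merely technical.
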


\begin{proof}
Any recipe graph can be (de)composed in the form of a binary tree where each leaf is an atomic recipe graph, and each non-leaf is the composition of its children, and the root is the recipe. 
Let $\{R_1,\ldots,R_n\}$ be the leaves, and $R$ be the root. 
So $\decompose(R) = \{R_1,\ldots,R_n\}$
and $R \in \compose(\{R_1,\ldots,R_n\})$. 
\end{proof}

So we have provided functions for composing and decomposing recipes.
In this, we see that atomic recipes are the basic building blocks for recipes.

\section{Type substitution in recipes}
\label{section:typesubstitution}

We now consider how we can substitute individual comestibles and actions. 
In other words, how we can update the typing function.

A {\bf binding} is a tuple of the form $(n,t)$ where $n \in {\cal C} \cup {\cal A}$ is a node and $t$ is a type from a type hierarchy.
A {\bf substitution set} $T$ is a set of bindings s.t. for all $(n,t),(n',t') \in T$, if $n \neq n'$, 
then $t = t'$ (i.e. each binding refers to a different node).

\begin{definition}
\label{def:typesubstitution}
For a recipe $R$, where $\type(R) = F$, 
and a substitution set $T$, 
the {\bf substitution} in $F$ by $T$, 
denoted $F\otimes T$, 
is defined as follows for each $n \in \nodes(R)$.
\[
F \otimes T(n) =
\left\{
\begin{array}{ll}
t & \mbox{ if } (n,t) \in T\\
F(n) & \mbox{ otherwise } 
\end{array}
\right.
\] 
For convenience, we let $R\otimes T$ denote the recipe $R'$ where 
$\graph(R') = \graph(R)$ 
and $\type(R') = F\otimes T$. 
\end{definition}

\begin{table*}
\begin{center}
\begin{tabular}{|l|lllll|}
\hline
& c1 & a1 & c2 & a2 & c3\\
\hline
$F$ & raw carrot & chop & chopped carrot & boil & soup\\
\hline
$F' = F \otimes \{ (c1,\mbox{raw onion}) \}$ & raw onion & chop & chopped carrot & boil & soup\\
\hline
$F'' = F' \otimes \{ (c2,\mbox{chopped onion}) \}$ & raw onion & chop & chopped onion & boil & soup\\
\hline
\end{tabular}
\end{center}
\caption{\label{tab:substitution:1} 
Consider a recipe graph $G$ with 
nodes $\{\rm c1,a1,c2,a2,c3\}$
and edges $\rm (c1,a1),(a1,c2),(c2,a2),(a2,c3)$.
The typing function $F$ is given in the first row of the table.
The second row is for the updated typing function $F'$ 
which results from a primary substitution set $\{$($c1$, raw onion)$\}$.
The third row is for the updated typing function $F''$ 
which results from a second substitution set $\{$($c2$,chopped onion)$\}$.
}
\end{table*}

For an illustration of type substitution, see Table \ref{tab:substitution:1}, where the first row gives the assignment for the typing function $F$, and the second row gives the updated assignment after the substitution of ``raw onion" for node c1.

Type substitutions allow any recipe to be turned into any other isomorphic recipe.
But note that type substitution may cause the recipe to violate the acceptability tupes.
We discuss how we deal with this issue after the following result.

\begin{proposition}
For any recipes $R$ and $R'$, 
if $\graph(R)$ and $\graph(R')$ are isomorphic,
then there is a substitution set $T$
s.t. $F \otimes T = F'$
where $\type(R) = F$ and $\type(R') = F'$
\end{proposition}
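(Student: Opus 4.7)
The plan is to construct the substitution set $T$ explicitly using the isomorphism witnessing $\graph(R) \cong \graph(R')$, and then check pointwise that $F \otimes T$ agrees with $F'$.

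First, I would fix a bijection $b : \nodes(R) \to \nodes(R')$ realising the graph isomorphism (and respecting the bipartition, so that $b$ sends comestible nodes to comestible nodes and action nodes to action nodes; this follows because isomorphisms of bipartite graphs that are connected preserve the parts, up to swapping, and the parts are forced by the arc directions). Then I would set
\[
T = \{\, (n,\, F'(b(n))) \mid n \in \nodes(R) \text{ and } F(n) \neq F'(b(n)) \,\}.
\]
Because each $n \in \nodes(R)$ occurs in at most one pair of $T$, the set $T$ is a legitimate substitution set under the intended reading of the condition in the definition (``each binding refers to a different node'').

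Next, I would verify the equality by a two-case check at an arbitrary $n \in \nodes(R)$. If $F(n) \neq F'(b(n))$, then $(n, F'(b(n))) \in T$, so the first clause of Definition~\ref{def:typesubstitution} gives $(F \otimes T)(n) = F'(b(n))$. If instead $F(n) = F'(b(n))$, then $n$ does not appear in any binding of $T$, and the second clause gives $(F \otimes T)(n) = F(n) = F'(b(n))$. Either way $(F \otimes T)(n) = F'(b(n))$, so $F \otimes T$ and $F'$ coincide as required.

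The only real obstacle is a mild domain-mismatch: $F$ is a function on $\nodes(R)$, whereas $F'$ is a function on $\nodes(R')$, so the literal equation $F \otimes T = F'$ in the proposition has to be read modulo the isomorphism $b$ (equivalently, we view both as functions on $\nodes(R)$ by precomposing $F'$ with $b$). Once this identification is made explicit, the construction above gives equality on the nose, and nothing further is required; in particular, no verification that $R \otimes T$ is itself a recipe is needed, since the proposition only asserts the equality of typing functions.
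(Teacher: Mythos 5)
Your proof is correct and takes essentially the same route as the paper's: both fix a bijection $b$ witnessing the isomorphism and set $T = \{(n,t) \mid F(n) \neq F'(b(n)) \text{ and } F'(b(n)) = t\}$, after which the equality $F \otimes T = F'$ follows. Your version is in fact slightly more careful than the paper's, since you carry out the pointwise two-case verification and flag the domain mismatch between $\nodes(R)$ and $\nodes(R')$ (resolved by precomposing $F'$ with $b$), a point the paper's one-line conclusion leaves implicit.
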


\begin{proof}
Assume $R$ and $R'$ are isomorphic.
So they have the same structure, 
though possibly the names of the nodes and their labelling are different.
Let $b: \nodes(R) \rightarrow \nodes(R')$ be a bijection, 
and let $T = \{(n,t) \mid F(n) \neq F'(b(n)) \mbox{ and } F'(b(n)) = t\}$.
So $F \otimes T = F'$.
\end{proof}


We now consider how the need for substitutions arises in practice. 
A {\bf primary substitution} is a substitution that has been undertaken because we lack some food item or
we are unable to do an action (perhaps because we lack required equipment or ability), whereas a {\bf secondary
substitution} is a substitution that has been carried out to deal with acceptability issues raised by the
primary substitution.
For example, suppose we lack fresh spaghetti for a recipe that lists it as an ingredient.
We could use dried spaghetti as a substitute for this missing ingredient.
This would be a primary substitution.
However, the cooking time of fresh spaghetti is 3 minutes whereas the cooking time of dried spaghetti is 11 minutes.
Assuming we have appropriate acceptability tuples, this would result in a violation of the acceptability tuples, 
and so we would need to substitute the action ``boil spaghetti for 3 minutes" to ``boil spaghetti for 11 minutes". This would be a secondary substitution which would be required for the recipe to regain acceptability. 

A {\bf primary substitution set}, denoted $P$, is a set of primary substitutions.
For instance, 
suppose we have a recipe $R$, and we are missing ingredients $F(n_1),\ldots,F(n_k)$ 
where $n_1,\ldots,n_k$ are nodes in $\inputs(R)$,  
then we would need alternative ingredients $t_1,\ldots,t_k$ 
to give the primary substitution set $\{ (n_1,t_1),\ldots,(n_k,t_k) \}$. 
As defined next, a secondary substitution set is needed to fix any acceptability problems created by the primary substitution set.

\begin{definition}
A substitution set $S$ is a {\bf secondary substitution set} for recipe $R$, 
primary substitution set $P$, 
and acceptability tuples $X$ 
iff $P \cup S$ is a substitution set,
and $R\otimes (P \cup S)$ is acceptable w.r.t. $X$. 
\end{definition}

An illustration of the use of primary and secondary substitution sets is given in Table \ref{tab:substitution:1}.
The next proposition follows directly from Definition \ref{def:typesubstitution}.

\begin{proposition}
For a recipe $R$, where 
$\type(R) = F$,
the following hold:
(Reflexivity) 
$F\otimes \{(n,F(n))\} = F$; 
(Associativity) 
if $n_1 \neq n_2$, then
$(F\otimes\{(n_1,t_1)\})\otimes\{(n_2,t_2)\}$ 
=  $(F\otimes\{(n_2,t_2)\})\otimes\{(n_1,t_1)\}$; 
(Reversibility) 
$(F\otimes\{(n,t)\})\otimes\{(n,F(n))\} = F$;
and (Empty) 
if $T = \{ (n_1,t_1),\ldots,(n_i,t_i)\}$, 
and $n_1,\ldots, n_i \not\in \nodes(R)$, 
then $F\otimes T = F$.
\end{proposition}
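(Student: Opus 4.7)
My plan is to verify each of the four clauses by unfolding Definition \ref{def:typesubstitution} pointwise on each node $n \in \nodes(R)$. Since two typing functions on $\nodes(R)$ are equal iff they agree on every node, each clause reduces to a small case analysis on whether a given node appears in the substitution set. I would treat the four clauses in the order they are stated, as they build on the same pattern of reasoning.

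For \emph{Reflexivity}, I would fix an arbitrary $n' \in \nodes(R)$ and split on whether $n' = n$. If $n' = n$, then $(n', F(n')) = (n, F(n)) \in T$, so by the first clause of Definition \ref{def:typesubstitution} we get $F \otimes T(n') = F(n) = F(n')$; otherwise $n'$ does not appear in $T$ and the second clause yields $F(n')$ directly. For \emph{Empty}, the argument is even shorter: since none of the $n_j$ lie in $\nodes(R)$, no $n \in \nodes(R)$ appears as the first component of a binding in $T$, so the second clause of the definition always applies and $F \otimes T(n) = F(n)$.

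For \emph{Associativity} (i.e.\ the commutation of two single-binding substitutions on distinct nodes), I would fix an arbitrary node $n$ and consider three cases: $n = n_1$, $n = n_2$, and $n \notin \{n_1, n_2\}$. In the first case, the left-hand side evaluates to $t_1$ (the inner substitution fires, the outer does not, since $n \neq n_2$), and the right-hand side also evaluates to $t_1$ (the inner substitution does not fire because $n \neq n_2$, but the outer does). The case $n = n_2$ is symmetric, and in the third case both sides reduce to $F(n)$. Here it is essential to use the hypothesis $n_1 \neq n_2$ to guarantee that at most one of the two bindings ever applies at a given node. For \emph{Reversibility}, I would again split on whether a fixed $n'$ equals $n$: if $n' = n$, the inner substitution sends $n$ to $t$ and the outer overrides this to $F(n)$; if $n' \neq n$, neither substitution fires and the value stays $F(n')$.

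The statement is essentially a collection of bookkeeping lemmas, so there is no deep obstacle; the one point requiring care is making sure, in the Associativity clause, to spell out why the distinctness hypothesis $n_1 \neq n_2$ is needed, namely to ensure that the two singleton substitution sets can be applied in either order without one masking the other. Everything else is a direct appeal to the definition of $\otimes$.
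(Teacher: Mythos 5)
Your proposal is correct, and it matches the paper, which gives no explicit proof for this proposition but simply asserts that it ``follows directly from Definition \ref{def:typesubstitution}'' --- your pointwise case analysis on whether each node occurs in the substitution set is exactly the routine unfolding that claim presupposes. Your observation that the hypothesis $n_1 \neq n_2$ is what prevents one singleton binding from masking the other in the Associativity (really, commutativity) clause is the only non-trivial point, and you handle it correctly.
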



A desirable feature of updating a typing function is that unnecessary updates are not done.
So we seek the minimal (by subset) secondary substitution sets as defined next.

\begin{definition}
A {\bf substitution pair} for a recipe $R$ and acceptability set $X$ is a tuple $(P,S)$ 
where $P$ (respectively $S$) is a primary (respectively secondary) substitution set
and $R\otimes(P \cup S)$ is acceptable w.r.t. $X$
and there is no $S' \subset S$ such that $R\otimes(P \cup S')$ is acceptable w.r.t. $X$. 
\end{definition}

However, the above definition does not take into account the nature of individual substitutions. 
For instance, if we lack spaghetti for spaghetti bolognese, but we have tagliatelle and rice, either would be possible substitutes, but many would judge tagliatelle to be a much less drastic change to the recipe. 
To address this issue, we use a distance measure to compare comestibles and actions. So for comestible or action types $t_1$ and $t_2$, $d(t_1,t_2)$ denotes the distance between $t_1$ and $t_2$. The smaller the distance, the better one would substitute for another. So $d(t_1,t_2) = 0$ means $t_1$ and $t_2$ would be perfect substitutes for each other. Since it is a distance measure, it is always the case that $d(t,t) = 0$, and so any comestible or action is a perfect substitute for itself. 
It may also be appropriate in some cases to define $d$ with respect to a recipe so that, for example, aquafaba might reasonably replace egg in baking but not in an omelette.

A distance measure can be defined based on a word embedding such as the general purpose word embeddings Word2Vec \cite{Mikolov2013} or Glove \cite{Pennington2014}, or a specialized word embedding such as Food2Vec which is a pre-trained word embedding for ingredient substitution \cite{Pellegrini2021}. For comestibles $t_1$ and $t_2$, and a word embedding, the distance function $d(t_1,t_2)$ is the cosine similarity between $t_1$ and $t_2$ in the word embedding.

Alternatively, a distance measure can be derived from knowledge graphs or ontologies. There are numerous resources on options for substitutions (e.g. substitutions to transfer a dish into vegan dish \cite{Steen2010}) that can be used as the basis of specifying distance measures, or distance measures can be defined as combination of word embeddings and ontological knowledge \cite{Shirai2021}.
Furthermore, distance can be calculated so that generalization is penalized. In other words, 
if we have the option of substituting $t$ by $t_1$ or $t_2$, and $t_1$ is relatively similar to $t$ and as specialized in the hierarchy as $t$, whereas $t_2$ is an ancestor of $t$ but $t_2$ is a very general type, then we may set $d$ so that $d(t,t_1)$ is much lower than $d(t,t_2)$, as for instance when $t$ is carrot, $t_1$ is parsnip, and $t_2$ is vegetable.


Assuming we have a distance measure,
we can calculate the cost of a substitution set as in the following definition
which uses summation. But there other simple alternatives such as max (i.e. the distance $d(F(n),t)$ of the substitution $(n,t) \in P\cup S$ that is greater than or equal to the distance $d(F(n'),t')$ of any substitution $(n',t') \in P\cup S)$). 
We will investigate these alternatives in future work.

\begin{definition}
The {\bf cost} of substitution pair $(P,S)$ 
w.r.t distance measure $d$ and typing function $F$ is
$\cost_d(P,S) = \sum_{(n,t) \in P\cup S} d(F(n),t)$.
\end{definition}

\begin{example}
Consider recipe $R$ for a vegetable soup where $\type(R) = F$,
and $\{c1,c2,a1,a2\}\subseteq \nodes(R)$, 
and 
$F(\mbox{\rm c1})$ = $\mbox{\rm raw carrot}$,
$F(\mbox{\rm c2})$ = $\mbox{\rm barley}$,
$F(\mbox{\rm a1})$ = $\mbox{\rm chop carrot}$,
and
$F({\rm a2})$ = $\mbox{\rm soak barley}$.
For a substitution pair $(P,S)$, 
suppose $P$ = $\{ \mbox{\rm (c1, raw onion)}$,  $\mbox{\rm (c2, potato)}  \}$,
and $S$ = $\{ \mbox{\rm (a1, chop onion)}$,   $\mbox{\rm (a2, peel and chop potato)}  \}$.
So 
\[
\begin{array}{ll}
\cost_d(P,S) = & d(\mbox{\rm raw carrot},\mbox{\rm raw onion}) \\
& + d(\mbox{\rm barley},\mbox{\rm potato}) \\
& + d(\mbox{\rm chop carrot}, \mbox{\rm chop onion}) \\
& + d(\mbox{\rm soak barley},\mbox{\rm peel and chop potato})\\
\end{array}
\]
\end{example}

\begin{definition}
A substitution pair $(P,S)$ for recipe $R$ and acceptability pair $X$ is a {\bf preferred substitution pair} w.r.t. distance $d$
iff for all substitution pairs $(P',S')$ for $R$ and $X$, 
$\cost_d(P,S) \leq \cost_d(P',S')$.
\end{definition}

So type substitution allows us to update a typing function, and it can take account of the need for secondary substitutions. Furthermore, it can take into account how drastic the proposed changes are. However, it does not allow changes to the structure of the graph.

\section{Structural substitution in recipes}
\label{section:structuralsubstitution}

We now consider how we can update the structure of the graph by allowing substitution of subgraphs so that a subgraph can be replaced by another subgraph. We start with three subsidiary definitions.

\begin{definition}
For recipes $R$ and $R'$, where $R' \sqsubseteq R$, the {\bf front} set is
$\front(R,R')$ = $(\outputs(R')\setminus\outputs(R)) \cup (\inputs(R')\setminus\inputs(R))$.
\end{definition}

So the front is the set of nodes that are in or out nodes in the subgraph but not in or out nodes in the graph as illustrated in Figures \ref{fig:structural} and \ref{fig:structural:2}. 

\begin{definition}
A recipe $R'$ is an {\bf untrimmed subrecipe } of a recipe $R$,
denoted $R' \sqsubseteq^* R$,
iff $R' \sqsubseteq R$
and for all $\forall a \in \actions(R')$, 
if $(c,a)$ or $(a,c)$ is in $\arcs(R)$, then $c \in \nodes(R')$.
\end{definition}

So an untrimmed subrecipe is such that for each action in the subgraph, all the comestible nodes that are connected in the original graph are in the subgraph.
In Figures \ref{fig:structural} and \ref{fig:structural:2}, both $R_1$ and $R_2$ are untrimmed subrecipes of $R$. 

We also require the following notion of parallel which captures whether two subgraphs connect to the front nodes in the same direction.
For example, in Figure \ref{fig:structural}, both $a2$ and $a9$ connect to the front node $c3$ in the same direction (i.e. $c3$ is the target in both subgraphs).


\begin{definition}
Recipe $R_1$ is {\bf parallel} to recipe $R_2$ w.r.t. recipe $R$
iff
for all $c \in \front(R,R_1)$, 
the following hold:
(1) for all $(c,a) \in \arcs(R_1)$, 
there is a $(c,a') \in \arcs(R_2)$;
and (2) for all $(a,c) \in \arcs(R_1)$, 
there is a $(a',c) \in \arcs(R_2)$.
\end{definition}

We use the above three definitions as conditions in the following definition of structural substitution.

\begin{definition}
Let $R$, $R_1$, and $R_2$, be recipes. 
The {\bf structural substitution} of $R_1$ by $R_2$ in $R$, denoted $R[R_1/R_2]$, is defined as follows: 
If (i) $\front(R,R_1) \subseteq (\inputs(R_2)\cup\outputs(R_2))$, 
(ii) $R_1$ is parallel with $R_2$ w.r.t. $R$,
(iii) $R_1 \sqsubseteq^* R$, 
(iv) $(\nodes(R)\setminus\nodes(R_1)) \cap \nodes(R_2) = \emptyset$,
(v) for all $n \in (\nodes(R)\setminus\nodes(R_1))$, 
and for all $n' \in \nodes(R_2)$,
if $F(n) \simeq F_2(n')$, then $n = n'$,
then $R[R_1/R_2]$ is $(C',A',E',F')$ where 
\begin{enumerate}
\item $C' = (\comestibles(R)\setminus\comestibles(R_1)) \cup \comestibles(R_2)$
\item $A' = (\actions(R)\setminus\actions(R_1)) \cup \actions(R_2)$
\item $E' = (\arcs(R)\setminus\arcs(R_1)) \cup  \arcs(R_2)$
\item for all $n\in C'\cup A'$, 
$F'(n) = F_2(n)$ if $n \in\nodes(R_2)$,
and $F'(n) = F(n)$ if $n \notin\nodes(R_2)$.
\end{enumerate}
otherwise $R[R_1/R_2] = \bot$ (which denotes failure of structural substitution).
\end{definition}

So the nodes, arcs, and labelling, of the graph obtained by substitution are those of $R$ minus those of $R_1$ and plus those of $R_2$ as illustrated in Figures \ref{fig:structural} and \ref{fig:structural:2}. 
The result of structural substitution is a recipe.

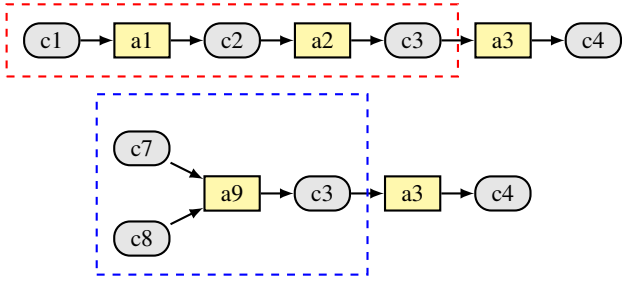
\begin{figure}
\begin{center}
\begin{tikzpicture}
[->,>=latex,thick, scale=0.6,
com/.style={draw,text centered, text width=5mm,
shape=rectangle, rounded corners=6pt,
fill=gray!20,font=\footnotesize},
act/.style={draw,text centered, 
shape=rectangle,text width=5mm,
fill=yellow!40,font=\footnotesize}
]
\node[com] (c1) [] at (0,0) {c1};
\node[act] (a1) [] at (2,0) {a1};
\node[com] (c2) [] at (4,0) {c2};
\node[act] (a2) [] at (6,0) {a2};
\node[com] (c3) [] at (8,0) {c3};
\node[act] (a3) [] at (10,0) {a3};
\node[com] (c4) [] at (12,0) {c4};
\path	(c1) edge node[above] {} (a1);
\path	(a1) edge node[above] {} (c2);
\path	(c2) edge node[above] {} (a2);
\path	(a2) edge node[above] {} (c3);
\path	(c3) edge node[above] {} (a3);
\path	(a3) edge node[above] {} (c4);
\draw[red,thick,dashed] (-1,-0.8) rectangle (9,0.8);
\end{tikzpicture}
\end{center}
\begin{center}
\begin{tikzpicture}
[->,>=latex,thick, scale=0.6,
com/.style={draw,text centered, text width=5mm,
shape=rectangle, rounded corners=6pt,
fill=gray!20,font=\footnotesize},
act/.style={draw,text centered, 
shape=rectangle,text width=5mm,
fill=yellow!40,font=\footnotesize}
]
\node[com] (c7) [] at (4,1) {c7};
\node[com] (c8) [] at (4,-1) {c8};
\node[act] (a9) [] at (6,0) {a9};
\node[com] (c3) [] at (8,0) {c3};
\node[act] (a3) [] at (10,0) {a3};
\node[com] (c4) [] at (12,0) {c4};
\path	(c7) edge node[above] {} (a9);
\path	(c8) edge node[above] {} (a9);
\path	(a9) edge node[above] {} (c3);
\path	(c3) edge node[above] {} (a3);
\path	(a3) edge node[above] {} (c4);
\draw[blue,thick,dashed] (3,-1.8) rectangle (9,2.2);
\end{tikzpicture}
\end{center}
\caption{\label{fig:structural}Example of structural substitution. The top graph refers to recipe $R$, the subgraph in the top dashed box refers to $R_1$, the bottom graph refers to recipe $R'$, and the subrecipe in the bottom dashed box refers to $R_2$. 
Hence, $\front(R,R_1) \subseteq (\inputs(R_2)\cup\outputs(R_2)) = \{c3\}$. 
Suppose $F_1(c3) = F_2(c3)$ where $\type(R_1) = F_1$ and $\type(R_2) = F_2$. 
Hence, $R'$ is the result of the structural substitution  and so $R' = R[R_1/R_2]$. Here, $R_1$ produces $F_1(c3)$ from $F_1(c1)$ and $R_2$  produces $F_2(c3)$ from $F_2(c7)$ and $F_2(c8)$. }
\end{figure}


\begin{proposition}
For any recipes $R$, $R_1$ and $R_2$, 
if $R[R_1/R_2] \neq \bot$, 
then $R[R_1/R_2]$ is a recipe.
\end{proposition}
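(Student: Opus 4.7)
The plan is to verify, one by one, the five recipe-graph conditions and the typing-function condition from Definition~\ref{def:recipe} for the tuple $(C',A',E',F')$ produced by $R[R_1/R_2]$, using conditions (i)--(v) of the structural substitution. I will first dispatch the easy conditions, then focus on the two that genuinely use the front/parallel/untrimmed machinery.

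First, I would check the bookkeeping. Non-emptiness of $C'$ and $A'$ follows because $R_2$ is itself a recipe, contributing non-empty comestible and action sets. The arc set $E'$ is bipartite because both $\arcs(R)\setminus\arcs(R_1)$ and $\arcs(R_2)$ are bipartite and, by condition~(iv), the only nodes shared between the two pieces lie in $\front(R,R_1)\subseteq\nodes(R_2)$, which are comestibles. For the action-degree condition, every action $a\in \actions(R_2)$ keeps its incoming/outgoing arcs since $\arcs(R_2)\subseteq E'$, and every action $a\in\actions(R)\setminus\actions(R_1)$ keeps all its incident arcs because condition~(iii) (untrimmedness of $R_1$) guarantees that all comestible neighbours of such an action lie outside $\nodes(R_1)$, so none of its arcs was deleted.

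The first genuinely delicate condition is condition~(5), that each comestible has at most one incoming arc. Comestibles in $\comestibles(R_2)$ inherit in-degree at most one from $R_2$, except possibly at front nodes, where an arc in $\arcs(R)\setminus\arcs(R_1)$ might also point in. Here the parallel condition does the work: for any $c\in\front(R,R_1)$, every incoming arc $(a,c)\in\arcs(R)\setminus\arcs(R_1)$ comes from an action $a\notin\actions(R_1)$ (by untrimmedness), and since $R_1$ is parallel to $R_2$, any arc $(a',c)\in\arcs(R_1)$ is matched by some $(a'',c)\in\arcs(R_2)$; combined with condition~(5) for $R$ itself, this forces the incoming arcs around $c$ to agree, so $c$ ends up with a single incoming arc in $E'$. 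The remaining comestibles in $\comestibles(R)\setminus\comestibles(R_1)$ obviously keep their original at-most-one in-degree.

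The main obstacle is showing that $(C'\cup A', E')$ is connected and acyclic. Connectedness follows because $R_2$ is internally connected, $R$ with $R_1$ removed but the front nodes retained is still glued to the rest of $R$ through those front nodes (again by untrimmedness: every arc at a front node whose other endpoint lies outside $R_1$ is preserved), and the parallel condition ensures $R_2$ attaches at the same front nodes and in the same in/out orientation. For acyclicity, I would argue by contradiction: a cycle in $E'$ would have to alternate through $\arcs(R_2)$ and $\arcs(R)\setminus\arcs(R_1)$, entering and leaving $R_2$ through front nodes; since the parallel condition guarantees that each front node has the same role (source to $R_2$ versus sink from $R_2$) as it did with respect to $R_1$, any such cycle could be translated into a cycle of $R$ by swapping each $R_2$-traversal for the corresponding $R_1$-path, contradicting acyclicity of $R$. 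Finally, the typing function $F'$ is well-defined and satisfies the hierarchy-branch condition of Definition~\ref{def:recipe} directly from condition~(v), which rules out exactly the offending pairs $n\in\comestibles(R)\setminus\comestibles(R_1)$, $n'\in\comestibles(R_2)$ with $F(n)\simeq F_2(n')$, while the internal conditions on $F$ and $F_2$ handle the within-side cases.
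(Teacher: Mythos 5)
Your overall strategy is the same as the paper's: verify the recipe conditions for $(C',A',E',F')$ one by one from conditions (i)--(v) of the definition of structural substitution. The paper's own proof is a terser version of exactly this plan, asserting connectivity from (i)--(iii), acyclicity from (iv), and well-formedness of $F'$ from (v). However, two of your steps do not hold up. First, you misstate untrimmedness: $R_1 \sqsubseteq^* R$ constrains actions \emph{inside} $R_1$ (every comestible neighbour in $R$ of an action of $R_1$ lies in $\nodes(R_1)$). Your claim that every comestible neighbour of an action $a \in \actions(R)\setminus\actions(R_1)$ lies \emph{outside} $\nodes(R_1)$ is the converse and is false --- front nodes are precisely comestibles of $R_1$ adjacent to outside actions. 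What is actually true is only that arcs incident to kept actions are never in $\arcs(R_1)$ and so survive into $E'$; but such a surviving arc can point at a comestible of $R_1$ that is \emph{not} a front node (e.g.\ an ingredient of $R$ inside $R_1$ that also feeds an outside action is in $\inputs(R_1)\cap\inputs(R)$, hence not in $\front(R,R_1)$), and nothing in (i)--(v) forces that node into $\nodes(R_2)$, so it may be deleted while its arc survives. Your argument does not address this.

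Second, and more seriously, your acyclicity step has a genuine gap. The parallel condition is a one-directional, arc-level condition: every arc of $\arcs(R_1)$ at a front node must be matched in direction by some arc of $\arcs(R_2)$; it constrains neither the arcs of $R_2$ beyond that, nor any path structure. So ``swapping each $R_2$-traversal for the corresponding $R_1$-path'' is unjustified, because no corresponding $R_1$-path need exist. Concretely: let $R_1$ have front nodes $c, c'$ on incomparable branches (no directed $R_1$-path from $c$ to $c'$), let $R_2$ contain a directed path from $c$ to $c'$, and let the retained part of $R$ contain a directed path from $c'$ back to $c$; one can arrange all of (i)--(v) to hold with $R$ acyclic, yet $R[R_1/R_2]$ contains a cycle. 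Note that the paper's proof does not close this hole either --- it derives acyclicity from the node-disjointness condition (iv), which cannot do that job --- so the missing idea is not one you could have borrowed from elsewhere in the paper; establishing acyclicity would require a genuinely stronger hypothesis (e.g.\ a path-level correspondence between $R_1$ and $R_2$ on front nodes) than the stated conditions provide. Your treatment of the in-degree condition, by contrast, reaches the right conclusion but for a fuzzy reason; the correct mechanism is that a front node with a kept incoming arc is in $\inputs(R_1)$, hence has an outgoing arc in $\arcs(R_1)$, which parallelism matches in $R_2$, and (i) then forces it into $\inputs(R_2)$, ruling out an $R_2$-incoming arc.
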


\begin{proof}
Assume conditions (i) to (v) of the definition for structural substitution hold. 
From (i) and steps 1 to 3 of the definition of structural substitution,
we have for all $n\in \nodes(R)\setminus\nodes(R_1)$,
and for all $n' \in \nodes(R_1)$,
if $(n,n')\in\arcs(R)$,
then $(n,n')\in\arcs(R[R_1/R_2])$,
and 
if $(n',n)\in\arcs(R)$,
then $(n',n)\in\arcs(R[R_1/R_2])$.
So $R$ connects with $R_1$ using the same arcs as $R[R_1/R_2]$ connects with $R_2$.
From (ii),
for all $c \in \front(R,R_1)$,
if $R_1$ connects with $c$ with an incoming (respectively outgoing) arc,
then $R_2$ connects with $c$ with an incoming (respectively outgoing) arc.
So $R_2$ connects with the nodes in $\front(R,R_1)$
with arcs in the same direction as $R_1$,
and furthermore from (iii), $R_2$ connects to all nodes in $\front(R,R_1)$.
From (iv), the nodes in $R_2$ do not introduce any cycles with $\nodes(R)\setminus\nodes(R_1)$.
From (v), and step 4 of the definition of structural substitution,
$F'$ is typing function for recipe according to Definition \ref{def:recipe}.
Therefore, from these assumptions, $R[R_1/R_2]$ is a recipe.
\end{proof}


\begin{figure}
\begin{center}
\begin{tikzpicture}
[->,>=latex,thick, scale=0.6,
com/.style={draw,text centered, text width=5mm,
shape=rectangle, rounded corners=6pt,
fill=gray!20,font=\footnotesize},
act/.style={draw,text centered, 
shape=rectangle,text width=5mm,
fill=yellow!40,font=\footnotesize}
]
\node[com] (c1) [] at (0,0) {c1};
\node[act] (a1) [] at (2,0) {a1};
\node[com] (c2) [] at (4,0) {c2};
\node[act] (a2) [] at (6,0) {a2};
\node[com] (c3) [] at (8,0) {c3};
\node[act] (a3) [] at (10,0) {a3};
\node[com] (c4) [] at (12,0) {c4};
\path	(c1) edge node[above] {} (a1);
\path	(a1) edge node[above] {} (c2);
\path	(c2) edge node[above] {} (a2);
\path	(a2) edge node[above] {} (c3);
\path	(c3) edge node[above] {} (a3);
\path	(a3) edge node[above] {} (c4);
\draw[red,thick,dashed] (3,-0.8) rectangle (9,0.8);
\end{tikzpicture}
\end{center}
\begin{center}
\begin{tikzpicture}
[->,>=latex,thick, scale=0.6,
com/.style={draw,text centered, text width=5mm,
shape=rectangle, rounded corners=6pt,
fill=gray!20,font=\footnotesize},
act/.style={draw,text centered, 
shape=rectangle,text width=5mm,
fill=yellow!40,font=\footnotesize}
]
\node[com] (c1) [] at (0,0) {c1};
\node[act] (a1) [] at (2,0) {a1};
\node[com] (c2) [] at (4,0) {c2};
\node[act] (a4) [] at (4,-1.5) {a4};
\node[com] (c5) [] at (6,-1.5) {c5};
\node[act] (a6) [] at (8,-1.5) {a6};
\node[com] (c3) [] at (8,0) {c3};
\node[act] (a3) [] at (10,0) {a3};
\node[com] (c4) [] at (12,0) {c4};
\path	(c1) edge node[above] {} (a1);
\path	(a1) edge node[above] {} (c2);
\path	(c2) edge node[above] {} (a4);
\path	(a4) edge node[above] {} (c5);
\path	(c5) edge node[above] {} (a6);
\path	(a6) edge node[above] {} (c3);
\path	(c3) edge node[above] {} (a3);
\path	(a3) edge node[above] {} (c4);
\draw[blue,thick,dashed] (3,-2.3) rectangle (9,0.8);
\end{tikzpicture}
\end{center}
\caption{\label{fig:structural:2}Example of structural substitution. The top graph refers to recipe $R$, the subgraph in the top dashed box refers $R_1$, the bottom graph refers to recipe $R'$, and the subrecipe in the bottom dashed box refers $R_2$. 
Hence, $\front(R,R_1) \subseteq (\inputs(R_2)\cup\outputs(R_2)) = \{c2,c3\}$. 
Suppose $F_1(c2) = F_2(c2)$ and $F_1(c3) = F_2(c3)$ where $\type(R_1) = F_1$ and $\type(R_2) = F_2$.
Hence, $R'$ is the result of the structural substitution  and so $R' = R[R_1/R_2]$. Here, $R_1$ produces $F_1(c3)$ from $F_1(c2)$ using action $F_1(a2)$ and $R_2$  produces $F_2(c3)$ from $F_2(c2)$ using actions $F_2(a4)$ and $F_2(a6)$. }
\end{figure}
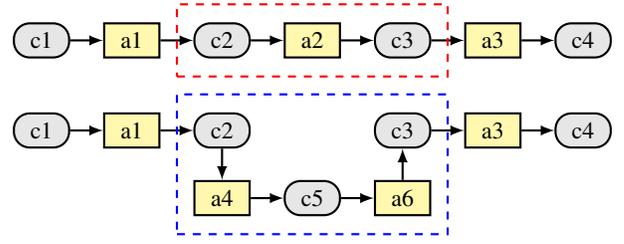

%
%
%

We can always turn one recipe into another recipe using structural substitution.

\begin{proposition}
For any recipes $R$ and $R'$,
there are recipes $R_1$ and $R_2$,
such that $R[R_1/R_2] = R'$.
\end{proposition}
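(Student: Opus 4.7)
The plan is to exhibit an explicit witness, taking $R_1 = R$ and $R_2 = R'$. With this choice, the substitution is designed to erase all of $R$ and install $R'$ in its place, so the proof reduces to checking the five preconditions of structural substitution and then reading off the result from the four defining clauses.

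First I would verify the preconditions for $R_1 = R$ and $R_2 = R'$. Condition (iii), $R \sqsubseteq^* R$, holds immediately from the definitions of $\sqsubseteq$ and $\sqsubseteq^*$. Since $\outputs(R)\setminus\outputs(R) = \emptyset$ and $\inputs(R)\setminus\inputs(R) = \emptyset$, the front set $\front(R,R_1)$ is empty, so (i) reduces to $\emptyset \subseteq \inputs(R')\cup\outputs(R')$ and (ii) becomes a vacuous universal statement over $c \in \front(R,R_1) = \emptyset$. Likewise $\nodes(R)\setminus\nodes(R_1) = \emptyset$, which makes both (iv) and (v) vacuous.

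Next I would compute $R[R_1/R_2]$ using the four clauses of the definition. Set-theoretically, $\comestibles(R)\setminus\comestibles(R_1) = \emptyset$, $\actions(R)\setminus\actions(R_1) = \emptyset$, and $\arcs(R)\setminus\arcs(R_1) = \emptyset$, so clauses 1--3 give $C' = \comestibles(R')$, $A' = \actions(R')$, and $E' = \arcs(R')$. Clause 4 assigns $F'(n) = F_2(n)$ for every $n \in C'\cup A'$, since $C'\cup A' \subseteq \nodes(R_2)$; thus $F' = \type(R')$. Hence $R[R/R'] = R'$.

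There is no genuine obstacle: the only delicate point is observing that when $R_1$ is taken to be all of $R$ the front set and the node difference both collapse to $\emptyset$, after which the parallel, disjointness and type-clash conditions become vacuous quantifications. The proposition therefore records a sanity check that structural substitution subsumes the ``replace everything'' transformation, and the witness above suffices.
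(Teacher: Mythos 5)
Your proof is correct and matches the paper's own argument exactly: both take the witness $R_1 = R$, $R_2 = R'$, observe that the front set and node difference collapse to $\emptyset$ so conditions (i)--(v) hold (vacuously where applicable), and read off $R[R/R'] = R'$ from the defining clauses. Your write-up is in fact slightly more explicit than the paper's in computing clauses 1--4 set-theoretically, but the approach is the same.
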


\begin{proof}
The simplest option to demonstrate this is to assume $R = R_1$, and $R' = R_2$.
From $R = R_1$, $\front(R,R_1) = \emptyset$ holds,
and so (i) and (ii) are satisfied.
Also from $R = R_1$, $R_1 \subseteq^{*} R$ holds,
and so (iii) is satisfied. 
In addition, from $R = R_1$, $\nodes(R)\setminus\nodes(R_1) = \emptyset$ holds,
and so (iv) and (v) are satisfied. 
So $R[R_1/R_1]$ is $(C',A',E',F')$
where
$C' = \comestibles(R_2)$,
$A' = \actions(R_2)$,
$E' = \arcs(R_2)$,
and 
for all $n\in C'\cup A'$, $n \in\nodes(R_2)$
and so  $F'(n) = F_2(n)$
when $\type(R_2)$ is $F_2$. 
Therefore, 
$R[R_1/R_2] = R'$.
\end{proof}





Structural substitution satisfies some simple properties 
including reflexivity (i.e. $R[R_1/R_1] = R$), 
and reversibility (i.e. $(R[R_1/R_2])[R_2/R_1]$), 
but it does not satisfy transitivity, or associativity. 
The following result shows that structural substitution subsumes type substitution.

\begin{proposition}
For any recipe $R$, and substitution pair $(P,S)$,
there is a structural substitution $[R_1/R_2]$
such that $R[R_1/R_2] = R\otimes(P,S)$. 
\end{proposition}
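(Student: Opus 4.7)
The plan is to choose $R_1 = R$ and $R_2 = R \otimes (P \cup S)$, and then verify that these choices satisfy all conditions of structural substitution and produce the desired result. This is essentially the same strategy as in the previous proposition (which showed any recipe can be transformed into any other via structural substitution), but specialized to the case where $R_2$ preserves the graph of $R$.

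First I would observe that, by Definition~\ref{def:typesubstitution}, $R \otimes T$ shares its graph with $R$; that is, $\graph(R_2) = \graph(R)$, so $\nodes(R_2) = \nodes(R) = \nodes(R_1)$ and $\arcs(R_2) = \arcs(R) = \arcs(R_1)$. In particular, $\inputs(R_2) = \inputs(R_1) = \inputs(R)$ and $\outputs(R_2) = \outputs(R_1) = \outputs(R)$.

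Next I would check the five conditions of structural substitution. Since $R_1 = R$, we have $\nodes(R) \setminus \nodes(R_1) = \emptyset$, which makes conditions (iv) and (v) hold vacuously. Also $\inputs(R_1) = \inputs(R)$ and $\outputs(R_1) = \outputs(R)$ give $\front(R,R_1) = \emptyset$, so (i) and (ii) are satisfied vacuously. Finally, (iii) holds because every recipe is an untrimmed subrecipe of itself.

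Applying the construction of $R[R_1/R_2]$, we obtain $C' = \comestibles(R_2)$, $A' = \actions(R_2)$, $E' = \arcs(R_2)$, and $F'(n) = F_2(n)$ for every $n \in \nodes(R_2)$, where $F_2 = \type(R_2) = F \otimes (P \cup S)$. Therefore $R[R_1/R_2] = R_2 = R \otimes (P \cup S)$, as required. The only conceptual point worth flagging is the (trivial) observation that $R \otimes T$ and $R$ share the same graph, so the ``structural'' substitution in this construction does not actually modify the structure; all the work is carried out by the typing-function component of the substitution.
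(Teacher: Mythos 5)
Your proposal is correct and takes essentially the same route as the paper: the paper's own proof likewise takes $R_1$ and $R_2$ with $\graph(R_1)=\graph(R_2)=\graph(R)$ and defines $\type(R_2)=F_2$ by $F_2(n)=t$ if $(n,t)\in P\cup S$ and $F_2(n)=F(n)$ otherwise, i.e.\ $R_2 = R\otimes(P\cup S)$, and you are in fact more careful than the paper in explicitly verifying conditions (i)--(v). The only difference is that the paper first dispatches a degenerate case $R\otimes(P,S)=\bot$, which you may safely omit since the definition of a substitution pair already requires $R\otimes(P\cup S)$ to be acceptable, hence a well-formed recipe.
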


\begin{proof}
If $R\otimes(P,S) = \bot$, then it is straightforward to specify recipes $R_1$ and $R_2$ so that $R[R_1/R_2] = \bot$.
Now suppose $R\otimes(P,S) \neq \bot$. 
Define $R_1$ and $R_2$ so that $\graph(R_1) = \graph(R_2)$.
Hence $\graph(R) = \graph(R\otimes(P,S))$. 
Let $\type(R) = F$ and $\type(R_2) = F_2$. 
Define $F_2$ so that if $(n,t) \in P\cup S$, $F_2(n) = t$, 
otherwise $F_2(n) = F(n)$.
\end{proof}

As with type substitutions, we can have structural substitutions that we regard as primary; they are subgraphs that we want to replace. But these can also have knock-on effects that call for  secondary substitutions. 
We define this below and illustrate in Figure \ref{fig:spaghetti:3}.

\begin{definition}
A sequence of substitutions $[R_{j+1}/R'_{j+1}],\ldots,[R_k/R'_k]$ 
is a {\bf secondary substitution sequence} for recipe $R$, 
primary substitution sequence $[R_1/R'_1],\ldots,[R_j/R'_j]$, and acceptability tuples $X$ 
iff $R[R_1/R'_1],\ldots,[R_k/R'_k]$ is acceptable w.r.t. $X$. 
\end{definition}


\begin{figure}
\begin{center}
\begin{tikzpicture}
[->,>=latex,thick, scale=0.75,
com/.style={draw,text centered, text width=40mm,
shape=rectangle, rounded corners=6pt,
fill=gray!20,font=\footnotesize},
act/.style={draw,text centered, 
shape=rectangle,text width=40mm,
fill=yellow!40,font=\footnotesize}
]
\node[com] (c0) [text width=35mm] at (0,8) {c0: boiling salted water};
\node[com] (c1) [text width=25mm] at (5,8) {c1: spaghetti};
\node[act] (a1) [text width=35mm] at (0,6.5) {a1: boil pasta for 10 min};
\node[com] (c2) [text width=35mm] at (0,5) {c2: cooked spaghetti};
\node[act] (a2) [text width=30mm] at (6,5) {a2: mix  and  heat};
\node[com] (c5) [text width=35mm] at (6,3.5) {c5: heated  bolognese sauce};
\node[act] (a3) [text width=25mm] at (0,3.5) {a3: drain and pour  in bowl};
\node[com] (c6b) [text width=20mm] at (-1,2) {c6: pasta water};
\node[com] (c6) [text width=30mm] at (3,2) {c7: spaghetti in bowl};
\node[act] (a4) [text width=50mm] at (5,0.5) {a4: pour bolognese sauce on spaghetti};
\node[com] (c7) [text width=40mm] at (5,-1) {c8: spaghetti bolognese};
\node[com] (c9) [text width=36mm] at (6,6.5) {c9: jar of bolognese sauce};
\path	(c0) edge node[above] {} (a1);
\path	(c1) edge node[above] {} (a1);
\path	(a1) edge node[above] {} (c2);
\path	(c9) edge node[above] {} (a2);
\path	(a2) edge node[above] {} (c5);
\path	(c2) edge node[above] {} (a3);
\path	(a3) edge node[above] {} (c6);
\path	(a3) edge node[above] {} (c6b);
\path	(c6) edge node[above] {} (a4);
\path	(c5) edge node[above] {} (a4);
\path	(a4) edge node[above] {} (c7);
\draw[blue,thick,dashed] (3,2.7) rectangle (8.8,7.1);
\draw[red,thick,dashed,-] (0.7,-1.7) -- (0.7,2.6) -- (2.7,2.6) -- (2.7,4.4) -- (8.7,4.4) -- (8.7,-1.7) -- (0.6,-1.7);
\end{tikzpicture}
\end{center}
\caption{\label{fig:spaghetti:3}Let $R$ be the recipe given in Figure \ref{fig:spaghetti:1} and let $R_1\sqsubseteq R$ (respectively $R'_1\sqsubseteq R$) be the subgraph given by the nodes $\{c3,c4,c5,a2\}$ (respectively $\{c7,c8,a4\}$) in Figure \ref{fig:spaghetti:1}. Let $R_2$ be the blue box (the lower dashed box above), and let $R'_2$ be the red box (the upper dashed box above). Suppose $[R_1/R_2]$ is a primary substitution, and $[R'_1/R'_2]$ is a secondary substitution, then the recipe above is $(R[R_1/R_2])[R'_1/R'_2]$. }
\end{figure}
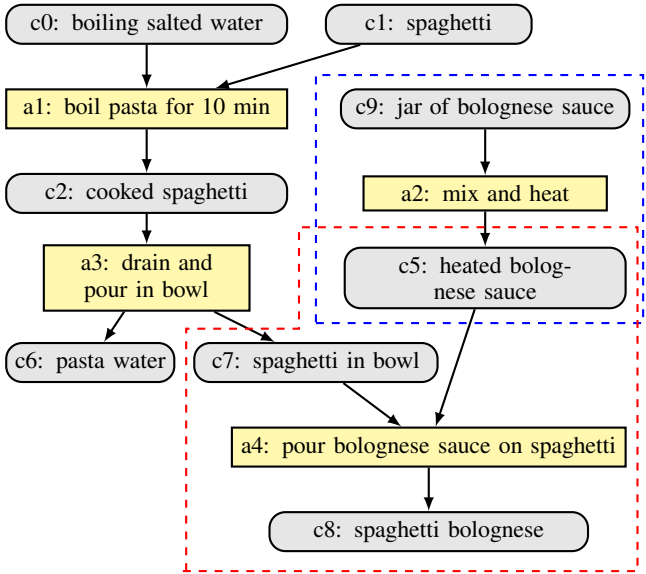

We can calculate the cost of any structural substitution by using a distance measure. For a structural substitution $[R_1/R_2]$, this could be a function of the graph edit distance between the $R_1$ and $R_2$, and the distance between the ingredients and actions in $R_1$ and those in $R_2$. We will investigate specific definitions in future work.

\section{Comparison with the literature}
\label{section:literature}

We consider how our proposal relates to the existing literature in the following areas.

{\bf Graphical representation of recipes.}
There are several proposals for graphical representation of recipes
with motivations including: as
a basis for searching and/or adapting recipes \cite{Shao2009,Dufour2012,Bergmann2013,Muller2015,Chang2018,Chu2021,Galanis2022}; as a target language for the ouput of NLP of free text recipes \cite{Wang2008,Schumacher2012,Pan2020,Yamakata2020}. 
However, little consideration has been given to how a graphical model can be used as a knowledge representation and reasoning formalism. 
A representation of recipes as plans was recently proposed in \cite{Pallagani2022}. This is an interesting alternative, but it does not support reasoning over recipes.

{\bf Ontologies and knowledge graphs.}
The use of ontologies and knowledge graphs for formally representing food or recipes has recently been gaining interest. A prominent example is the FoodOn ontology \cite{dooley2018,Dooley2022}. This has the form of a multi-faceted taxonomy, 
which covers a very broad range of food products and food processing steps. It cannot, however, support rich forms of reasoning due to its simple structure and small number of relationships. FoodKG is a large-scale food knowledge graph, which integrates nutrition information, general food substitutions, recipe data and food taxonomies \cite{Haussmann2019}. It does not support formal reasoning over recipes, but is a valuable resource that we plan to exploit in future implementations of our model.
An ontology design pattern for ingredient substitution in recipes was recently proposed in \cite{Lawrynowicz2022}. 
It models recipes as a set of ingredients and a set of instructions and ingredient substitution as transformations of these sets.
Its value, however, is mostly representational, as it  does not support reasoning over recipes and ingredient substitution.


{\bf Substitution}
Candidates for substitution can be idenitified by analysing recipes
\cite{Shidochi2009,Teng2012}. 
They can also be identified by combining explicit information about the ingredients in FoodKG, and implicit information from word embeddings \cite{Shirai2021}.
None of these methods provide a formalism for representing or reasoning with recipes, 
but they could be used for finding candidates for substitution for use in our framework.  
Finally, there is a proposal for using a formalism to capture features of objects (namely, shape, material, and role of the object) and then reason with that knowledge to identify alternative uses \cite{Olteteanu2016}. 
Potentially, this logic-based approach could be adapted for recipes by perhaps drawing on the approach in our proposal. 

\section{Discussion}
\label{section:discussion}

Cooking involves reasoning about comestibles and actions on them. 
It also involves the commonsense ability to repurpose recipes, in particular through substitution. 
In this paper, we have provided a formalism based on labelled bipartite graphs for representing and reasoning with recipes.
In future work, we plan to further develop the approach to allow for consideration of equipment and locations, timing of actions, and quantities of comestibles. We also plan to further develop the usage of distance in determining better substitutions. We will also investigate two possible implementations of the graphical formalism. The first one will use RDF and SHACL for the representation and validation of recipes, and will enable the integration of information from  resources such as FoodOn and FoodKG. The second will rely on a translation of the graphical formalism into a logical one (e.g. ASP) and will thereby support automated reasoning with recipes including execution of recipes, and automated methods for substitution.




\section*{Acknowledgements}

This research is funded by the Leverhulme Trust via the {\em Repurposing of
Resources: from Everyday Problem Solving through to Crisis Management} project (2022-2025).

\bibliographystyle{kr}
\bibliography{subgraph}

\end{document}